\def\eqref#1{equation~\ref{#1}}
\def\1{\bm{1}}
\DeclareMathAlphabet{\mathsfit}{\encodingdefault}{\sfdefault}{m}{sl}
\SetMathAlphabet{\mathsfit}{bold}{\encodingdefault}{\sfdefault}{bx}{n}
\setlist{leftmargin=5mm}
\newcommand{\cmark}{\ding{51}}%
\newcommand{\xmark}{\ding{55}}%
\numberwithin{equation}{section}
\newtheorem{theorem}{Theorem}
\newtheorem{othertheorem}{othertheorem}[section]
\newtheorem{lemma}[othertheorem]{Lemma}
\theoremstyle{definition}
\newtheorem{definition}[othertheorem]{Definition}
\theoremstyle{definition}
\newtheorem{fact}[othertheorem]{Fact}
\renewcommand{\v}{\mathbf{v}}
\newcommand{\x}{\bm{x}}
\newcommand{\y}{\bm{y}}
\newcommand{\f}{\bm{f}}
\renewcommand{\H}{\mathbf{H}}
\newcommand{\I}{\mathbf{I}}
\newcommand{\C}{\mathbf{C}}
\newcommand{\R}{\mathbb{R}}
\newcommand{\w}{\mathbf{w}}
\newcommand{\m}{\mathbf{m}}
\newcommand{\E}{\mathbb{E}}
\newcommand{\g}{\bm{g}}
\title{On the Convergence and Calibration of Deep Learning with Differential Privacy}
\author{\name Zhiqi Bu \email zbu@upenn.edu \\
      \addr University of Pennsylvania
      \AND
      \name Hua Wang \email wanghua@upenn.edu\\
    \addr University of Pennsylvania
      \AND
      \name Zongyu Dai \email daizy@sas.upenn.edu \\
      \addr University of Pennsylvania
    \AND
      \name Qi Long \email qlong@upenn.edu\\
      \addr University of Pennsylvania
      }
\begin{document}

\maketitle

\begin{abstract}
\normalsize
Differentially private (DP) training preserves the data privacy usually at the cost of slower convergence (and thus lower accuracy), as well as more severe mis-calibration than its non-private counterpart. To analyze the convergence of DP training, we formulate a continuous time analysis through the lens of neural tangent kernel (NTK), which characterizes the per-sample gradient clipping and the noise addition in DP training, for arbitrary network architectures and loss functions. Interestingly, we show that the noise addition only affects the privacy risk but not the convergence or calibration, whereas the per-sample gradient clipping (under both flat and layerwise clipping styles) only affects the convergence and calibration.

Furthermore, we observe that while DP models trained with small clipping norm usually achieve the best accurate, but are poorly calibrated and thus unreliable. In sharp contrast, DP models trained with large clipping norm enjoy the same privacy guarantee and similar accuracy, but are significantly more \textit{calibrated}. Our code can be found at \url{https://github.com/woodyx218/opacus_global_clipping}.
\end{abstract}

\section{Introduction}
Deep learning has achieved tremendous success in many applications that involve crowdsourced information, e.g., face image, emails, financial status, and medical records. However, using such sensitive data raises severe privacy concerns on a range of image recognition, natural language processing and other tasks \citep{cadwalladr2018revealed,rocher2019estimating,ohm2009broken,de2013unique,de2015unique}. For a concrete example, researches have recently demonstrated multiple successful privacy attacks on deep learning models, in which the attackers can re-identify a member in the dataset using the location or the purchase record, via the membership inference attack \citep{shokri2017membership,carlini2019secret}. In another example, the attackers can extract a person’s name, email address, phone number, and physical address from the billion-parameter GPT-2 \citep{radford2019language} via the extraction attack \citep{carlini2020extracting}. Therefore, many studies have applied differential privacy (DP) \citep{dwork2006calibrating,dwork2008differential,dwork2014algorithmic,mironov2017renyi,duchi2013local,dong2019gaussian}, a mathematically rigorous approach, to protect against leakage of private information \citep{abadi2016deep,mcsherry2007mechanism,mcmahan2017learning,geyer2017differentially}. 
To achieve this gold standard of privacy guarantee, since the seminal work \citep{abadi2016deep}, DP optimizers (including DP-SGD/Adam \citep{abadi2016deep, bassily2014private,bu2019deep}, DP-SGLD \citep{wang2015privacy,li2019connecting,zhang2021differentially}, DP-FedSGD and DP-FedAvg \citep{mcmahan2017learning}) are applied to train the neural networks while preserving high accuracy for prediction.

Algorithmically speaking, DP optimizers have two extra steps in comparison to the non-DP standard optimizers: the per-sample gradient clipping and the random noise addition, so that DP optimizers descend in the direction of the clipped, noisy, and averaged gradient (see \Cref{eq:DP_GD_discrete}). These extra steps protect the resulting models against privacy attacks via the Gaussian mechanism \citep[Theorem A.1]{dwork2014algorithmic}, at the expense of an empirical performance degradation compared to the non-DP deep learning, in terms of much slower convergence and lower utility. For example, state-of-the-art CIFAR10 accuracy with DP is $\approx 70\%$ without pre-training \citep{papernot2020tempered} (while non-DP networks can easily achieve over 95\% accuracy) and similar performance drops have been observed on facial images, tweets, and many other datasets  \citep{bagdasaryan2019differential,kurakin2022toward}.

Empirically, many works have evaluated the effects of noise scale, batch size, clipping norm, learning rate, and network architecture on the privacy-accuracy trade-off \citep{abadi2016deep,papernot2020tempered}. However, despite the prevalent usage of DP optimizers, little is known about its convergence behavior from a theoretical viewpoint, which is necessary to understand and improve the deep learning with differential privacy. 

We notice some previous attempts by \citep{chen2020understanding,bu2022automatic,song2021evading,bu2022automatic}, which either analyze the DP-SGD in the convex setting or rely on extra assumptions in the deep learning setting.

\textbf{Our Contributions\quad}
In this work, we establish a principled framework to analyze the dynamics of DP deep learning, which helps demystify the phenomenon of the privacy-accuracy trade-off.
\begin{itemize}
	\item We explicitly characterize the \textit{general training dynamics} of deep learning with DP-GD in \Cref{prop:DP-GD_gradient_flow}. We show a fundamental influence of the DP training on the NTK matrix, which causes the convergence to worsen. 
	\item This characterization leads to the \textit{convergence analysis} for DP training with small or large clipping norm, in \Cref{thm:local clipping no noise} and \Cref{thm:global clipping no noise}, respectively.
	\item We demonstrate via numerous experiments that a small clipping norm generally leads to more accurate but less calibrated DP models, whereas a large clipping norm effectively mitigates the \textit{calibration} issue, preserves a similar accuracy, and provides the \textit{same privacy guarantee}.
	\item We conduct the first experiments on DP and calibration with large models at the Transformer level.
\end{itemize}

To elaborate on the notion of \textit{calibration} \citep{guo2017calibration,niculescu2005predicting}, a critical performance measure besides accuracy and privacy, we provide a concrete example as follow. A classifier is calibrated if its average accuracy, over all samples it predicts with $p$ confidence (the probability assigned on its output class), is close to $p$ ($0<p< 1$). That is, a calibrated classifier's predicted confidence matches its accuracy. We observe that DP models using a small clipping norm are oftentimes too over-confident to be reliable (the predicted confidence is much higher than the actual accuracy), while a large clipping norm is amazingly effective on mitigating the mis-calibration. 


\section{Background}
\subsection{Differential privacy notion}
We provide the definition of  DP \citep{dwork2006calibrating,dwork2014algorithmic} as follows.
\begin{definition}\label{def:DP}
A randomized algorithm $M$ is $ (\varepsilon, \delta)$-differentially private (DP) if for any neighboring datasets $ S, S^{\prime} $ differ by an arbitrary sample, and for any event $E$,
\begin{align}
 \mathbb{P}[M(S) \in E] \leqslant \mathrm{e}^{\varepsilon} \mathbb{P}\left[M\left(S^{\prime}\right) \in E\right]+\delta.
\end{align}
\end{definition}

Given a deterministic function $G(S)$, adding noise proportional to $G$'s sensitivity makes it private. This is known as the Gaussian mechanism, as stated in \Cref{lem:sensitivity and mechanism} and widely used in DP deep learning.
\begin{lemma}[Theorem A.1 \citep{dwork2014algorithmic}; Theorem 2.7 \citep{dong2019gaussian}]\label{lem:sensitivity and mechanism}
Define the $ \ell_{2} $ sensitivity of any function $ G $ to be
 $R:=\sup _{S, S'}\|G(S)-G(S')\|_{2}$
where the supreme is over all neighboring datasets$(S,S')$. Then the Gaussian mechanism $\hat G(S) =G(S) +\sigma R\cdot\mathcal{N}(0,\mathbf{I})$
is $(\epsilon,\delta)$-DP for some $\epsilon$ depending on $(\sigma,p,\delta)$, where $p$ is the sampling ratio (e.g. batch size / total sample size).
\end{lemma}
We note that the interdependence among $\epsilon$ and $(\sigma,n,p,\delta)$ can be characterized by various privacy accountants, including Moments accountant \citep{abadi2016deep,canonne2020discrete}, Gaussian differential privacy (GDP) \citep{dong2019gaussian,bu2019deep}, Fourier accountant \citep{koskela2020computing}, Edgeworth Accountant \citep{wang2022analytical}, etc.,  each based on a different composition theory that accumulates the privacy risk $\epsilon(\sigma,n,p,\delta,T)$ differently over  $T$ iterations. 


\subsection{Deep learning with differential privacy}
\label{sec:dp deep}
DP deep learning \citep{TFprivacygithub,PTprivacygithub} uses a general optimizer, e.g. SGD and Adam, to update the neural networks with the
\begin{align}
\text{privatized gradient: } \sum_i C_i(R)\cdot\frac{\partial \ell_i}{\partial \w}+\sigma R\cdot \mathcal{N}(0,\I),
\label{eq:private grad}
\end{align}
where $\w$ is the trainable parameters of the network, $\frac{\partial \ell_i}{\partial \w}$ is the i-th per-sample gradient of loss $\ell$, and $\sigma$ is the noise scale that determines the privacy risk. Specifically, $C_i(R)$ is the clipping factor with the clipping norm $R$, which restricts the norm of the clipped gradient in that $\|C_i(R)\frac{\partial \ell_i}{\partial \w}\|\leq R$. There are multiple ways to design such an clipping factor. The most generic clipping \citep{abadi2016deep} uses $C_i=\min\{1,R/\|\frac{\partial \ell_i}{\partial \w}\|\}$, the automatic clipping \citep{bu2022automatic} uses $C_i=1/(\|\frac{\partial \ell_i}{\partial \w}\|+0.01)$ or the normalization $C_i=1/\|\frac{\partial \ell_i}{\partial \w}\|$, and the global clipping uses $C_i=\mathbb{I}\{\frac{\partial \ell_i}{\partial \w}\leq R\}$ to be defined in \Cref{sec:app_code}. In this work, we focus on the traditional clipping \citep{abadi2016deep} and observe that
$$\text{clipping/normalization} \Longleftrightarrow R/\left\|\frac{\partial \ell_i}{\partial \w}\right\|\overset{\text{small} R}{\longleftarrow}C_i=\min\left\{1,R/\left\|\frac{\partial \ell_i}{\partial \w}\right\|\right\}\overset{\text{large} R}{\longrightarrow}C_i=1\Longleftrightarrow\text{no clipping}.$$

In \eqref{eq:private grad}, the privatized gradient has two unique components compared to the standard non-DP gradient: the per-sample gradient clipping (to bound the sensitivity of the gradient) and the random noise addition (to guarantee the privacy of models). Empirical observations have found that optimizers with the per-sample gradient clipping, even when no noise is present, have much worse accuracy at the end of training \citep{abadi2016deep,bagdasaryan2019differential}. On the other hand, noise addition (without the per-sample clipping), though slows down the convergence, can lead to comparable or even better accuracy at the convergence \citep{neelakantan2015adding}. Therefore, it is important to characterize the effects of the clipping and the noising, which are under-studied while widely-applied in DP deep learning.

\section{Warmup: Convergence of Non-Private Gradient Method}\label{sec:warmup}
We start by reviewing the standard non-DP Gradient Descent (GD) for \textit{\textbf{arbitrary neural network}} and \textit{\textbf{arbitrary loss}}. In particular, we analyze the training dynamics of a neural network using the neural tangent kernel (NTK) matrix\footnote{We emphasize that our analysis are not limited to the infinitely wide or over-parameterized neural networks. Put differently, we don't assume the NTK matrix $\H$ to be deterministic nor nearly time-independent, as was the case in \citep{arora2019exact,lee2019wide,du2018gradient,allen2019convergence,zou2020gradient,fort2020deep,arora2019fine}.}.

Suppose a neural network $f$\footnote{The neural network $f$ (and thus the loss $\ell$ and $L$) is assumed to be differentiable following the convention of existing literature \citep{du2018gradient,allen2019convergence,xie2020diffusion,bu2021dynamical}, in the sense that sub-gradient exists everywhere. This differentiability is a necessary foundation of the back-propagation for deep learning.} is governed by weights $\w$, with samples $\x_i$ and labels $y_i$ ($i = 1,...,n$). Denote the prediction by $f_i = f(\x_i,\w)$, and the per-sample loss by $\ell_i = \ell(f(\x_i, \w),y_i)$, whereas the optimization loss $L$ is the average of per-sample losses,
\begin{align*}
    L(\w)=\frac{1}{n}\sum_{i=1}^n \ell(f(\x_i, \w),y_i).
\end{align*}
In discrete time, the gradient descent with a learning rate $\eta$ can be written as:
\begin{align*}
    \w(k+1)=\w(k)-\eta \frac{\partial L}{\partial \w}^\top=\w(k)-\frac{\eta}{n} {\sum}_{i}\frac{\partial \ell_i}{\partial \w(k)}.
\end{align*}
In continuous time, the corresponding \textit{gradient flow}, i.e., the ordinary differential equation (ODE) describing the weight updates with an infinitely small learning rate $\eta\to 0$, is then:
\begin{align*}
\dot{\w}(t)=-\frac{\partial L}{\partial \w(t)}^\top=-\frac{1}{n} {\sum}_{i}\frac{\partial \ell_i}{\partial \w(t)}.
\end{align*}
Applying the chain rules to the gradient flow, we obtain the following general dynamics of the loss $L$,
\begin{align}\label{eq:dotL_dynamic}
\dot L&=\frac{\partial L}{\partial \w}\dot \w=-\frac{\partial L}{\partial \w}\frac{\partial L}{\partial \w}^\top
=-\frac{\partial L}{\partial\f} \frac{\partial \f}{\partial \w}\frac{\partial \f}{\partial \w}^\top\frac{\partial L}{\partial\f}^\top
=-\frac{\partial L}{\partial\f}\H(t)\frac{\partial L}{\partial\f}^\top,
\end{align}
where $\frac{\partial L}{\partial\f} = \frac{1}{n}(\frac{\partial \ell_1}{\partial f_1}, ..., \frac{\partial \ell_n}{\partial f_n})\in \R^{1\times n}$, and the Gram matrix $\H(t):=\frac{\partial \f}{\partial \w}\frac{\partial \f}{\partial \w}^\top\in\mathbb{R}^{n\times n}$ is known as the NTK matrix, which is positive semi-definite and crucial to analyzing the convergence behavior.

To give a concrete example, let $\ell$ be the MSE loss $\ell_i(\w) =(f(\x_i, \w)-y_i)^2$ and $L_\text{MSE}=\frac{1}{n}\sum_i\ell_i(\w)=\frac{1}{n}\sum_i(f_i-y_i)^2$, then $\dot L_\text{MSE}=-4(\f-\y)^\top \H(t) (\f-\y)/n^2$.
Furthermore, if $\H(t)$ is positive definite, the MSE loss $L_\text{MSE}\to 0$ exponentially fast \citep{du2018gradient,allen2019convergence,zou2020gradient} , and the cross-entropy loss $L_\text{CE}\to 0$ at rate $O(1/t)$ \citep{allen2019convergence}.

\section{Continuous-time Convergence of DP Gradient Descent}\label{sec:analysis}
In this section, we analyze the weight dynamics and loss dynamics of DP-GD with an arbitrary clipping function in \textit{continuous-time} analysis. That is, we study only the gradient flow of the training dynamics as the learning rate $\eta$ tends to 0. Our analysis can generalize to other optimizers such as DP-SGD, DP-HeavyBall, and DP-Adam. 

\subsection{Effect of Noise Addition on Convergence}
Our first result is simple yet surprising: the \textit{gradient flow} of a stochastic noisy GD with non-zero noise \eqref{eq:DP_GD_discrete} is the same as that of the \textit{gradient flow} without the noise in \eqref{eq:DP-GD_gradient_flow}. Put it differently, the noise addition has no effect on the convergence of DP optimizers in the \textit{limit of continuous time analysis}. We note that DP-GD shares some similarity to another noisy gradient method, known as the stochastic gradient Langevin dynamics (SGLD \cite{welling2011bayesian}). However, while DP-GD has a noise magnitude proportional to $\eta$ and thus corresponds to a deterministic gradient flow, SGLD has a noise magnitude proportinal to $\sqrt{\eta}$, which is much larger when we let $\eta\to 0$ in the limit, and thus corresponds to a different continuous-time behavior: its gradient flow is a stochastic differential equation driven by a Brownian motion. We will extend this comparison to the discrete time in \Cref{sec:bayesian}.

To elaborate this point, we consider the DP-GD with Gaussian noise, following the notation in \eqref{eq:private grad},
\begin{align}\label{eq:DP_GD_discrete}
\w(k+1)=\w(k)-\frac{\eta}{n} \left(\sum_i C_i\frac{\partial\ell_i}{\partial\w(k)} +\sigma R\cdot \mathcal{N}(0,\I)\right).
\end{align}
Notice that this general dynamics covers both the standard non-DP GD ($\sigma=0$ and, $C_i=1$ if no clipping, or $C_i = c$ if batch clipping) and DP-GD with any clipping function. Through \Cref{prop:DP-GD_gradient_flow} (see proof in \Cref{sec:app_proofs}), we claim that the gradient flow of \eqref{eq:DP_GD_discrete} is the same ODE (not SDE) regardless of the value of $\sigma$. That is, different $\sigma$ always results in the same gradient flow as $\eta/n\to 0$.

\begin{fact}
\label{prop:DP-GD_gradient_flow}
For all $\sigma \ge 0$, the gradient descent in \eqref{eq:DP_GD_discrete} corresponds to the continuous gradient flow 
\begin{align}\label{eq:DP-GD_gradient_flow}
    \dot\w(t)&=-\frac{1}{n} {\sum}_{i}\frac{\partial\ell_i}{\partial\w(t)} C_i(t).
\end{align}
\end{fact}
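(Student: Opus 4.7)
The plan is to view the discrete iteration \eqref{eq:DP_GD_discrete} as a (non-standard) Euler-Maruyama discretization with a step size $\eta$ and show that the crucial scaling is that the noise enters with a coefficient of order $\eta$, \emph{not} $\sqrt{\eta}$, so the stochastic contribution is sub-leading and washes out in the $\eta\to 0$ limit. Concretely, rewrite \eqref{eq:DP_GD_discrete} as
\begin{align*}
\w(k+1)-\w(k) = -\eta\,b_k(\w(k)) + \eta\,\frac{\sigma R}{n}\,\xi_k,
\end{align*}
where $b_k(\w):=\frac{1}{n}\sum_i \nabla_\w \ell_i\,C_i$ is the (clipped) drift at step $k$ and $\xi_k\sim\mathcal N(0,\I)$ are i.i.d.\ across $k$. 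Identify the continuous time $t_k := k\eta$ and interpolate $\w_\eta(t)$ piecewise linearly between the $\w(k)$'s.

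Next I would telescope the updates to obtain, for any fixed horizon $T$ and $k=\lfloor T/\eta\rfloor$,
\begin{align*}
\w(k) \;=\; \w(0) \;-\; \sum_{j=0}^{k-1}\eta\,b_j(\w(j)) \;+\; \frac{\sigma R}{n}\sum_{j=0}^{k-1}\eta\,\xi_j.
\end{align*}
The deterministic sum is a Riemann sum converging to $-\int_0^T b(\w(s))\,ds$, exactly the right-hand side of \eqref{eq:DP-GD_gradient_flow} integrated. For the stochastic sum, independence of the $\xi_j$'s gives
\begin{align*}
\E\Bigl\|\sum_{j=0}^{k-1}\eta\,\xi_j\Bigr\|^2 \;=\; \eta^2\,k\cdot \E\|\xi_0\|^2 \;\le\; \eta\,T\,\E\|\xi_0\|^2 \;\xrightarrow{\eta\to 0}\; 0,
\end{align*}
so the noise contribution vanishes in $L^2$ (and hence in probability), uniformly on $[0,T]$ by a maximal inequality for the partial-sum martingale. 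This is the whole content of the ``certainty equivalence'' statement: because the Gaussian mechanism injects noise of amplitude $O(\eta)$ at each step, the total accumulated noise over a fixed time window shrinks like $\sqrt{\eta}$ and disappears in the continuous-time limit.

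The final step is to invoke a standard ODE/SDE convergence argument (Gronwall on the difference $\w_\eta(t)-\w(t)$, using local boundedness of the clipped drift $b$, which is automatic since $C_i\le 1$ and $\|C_i\nabla_\w\ell_i\|\le R$) to conclude that $\w_\eta(\cdot)\to\w(\cdot)$ on $[0,T]$, where $\w(\cdot)$ solves \eqref{eq:DP-GD_gradient_flow}. The main subtlety, rather than an obstacle, is that the clipping factors $C_i(t)$ are themselves functions of the trajectory and are only piecewise continuous, so one should phrase the limit using a Filippov-style or Caratheodory-style existence result for the limiting ODE; otherwise the argument is routine and the key point is purely the $\eta$ (versus $\sqrt{\eta}$) scaling of the DP noise.
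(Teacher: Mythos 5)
Your proposal is correct and follows essentially the same route as the paper: telescope the recursion over $r$ steps, identify the deterministic partial sum as a Riemann sum converging to $-\frac{1}{n}\int\sum_i\nabla_\w\ell_i C_i\,d\tau$, and observe that the accumulated noise has mean zero and variance $\eta^2 r\,\sigma^2R^2/n^2=\eta s\,\sigma^2R^2/n^2\to 0$, precisely the $\eta$-versus-$\sqrt{\eta}$ scaling you emphasize. The extra care you take at the end (maximal inequality, Gronwall, Filippov/Carath\'eodory well-posedness of the limiting ODE) goes beyond what the paper writes down, but it is a refinement of the same argument rather than a different one.
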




This result indeed aligns the conventional wisdom\footnote{See \url{github.com/pytorch/opacus/blob/master/tutorials/building_image_classifier.ipynb} and Section 3.3 in \citep{kurakin2022toward}.} of tuning the clipping norm $C$ first (e.g. setting $\sigma=0$ or small) then the noise scale $\sigma$, since the convergence is more sensitive to the clipping. We validate \Cref{prop:DP-GD_gradient_flow} in \Cref{fig:noise not important} by experimenting on CIFAR10 with small learning rate.

\begin{figure}[!htb]
\centering
\vspace{-0.2cm}
\hspace{-0.7cm}
\includegraphics[width=0.33\linewidth]{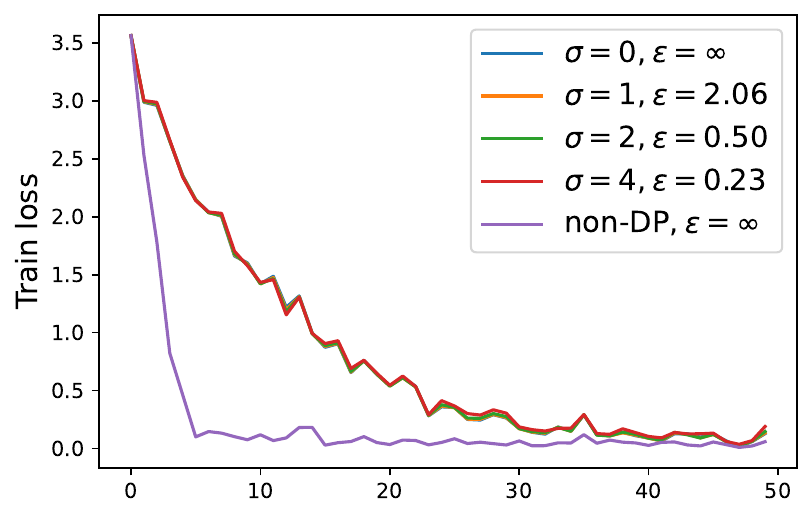}
\includegraphics[width=0.33\linewidth]{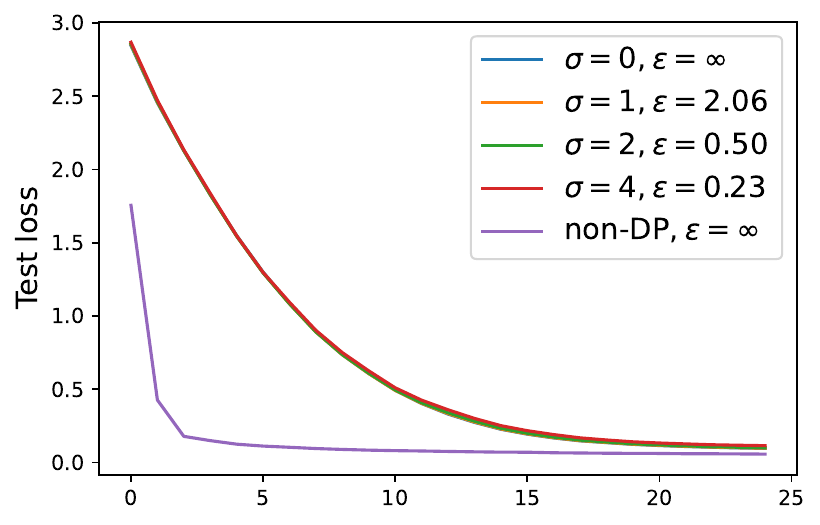}
\includegraphics[width=0.33\linewidth]{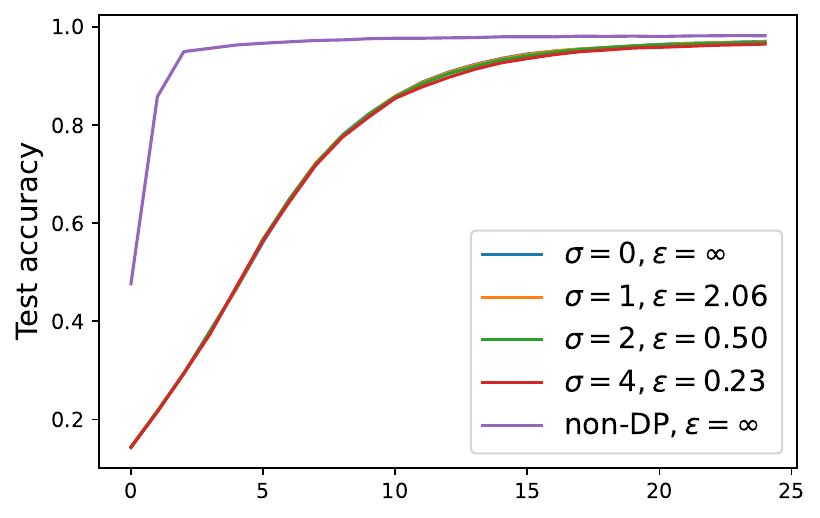}
\hspace{-1cm}
\caption{For fixed $R=1,\eta=0.1$, ViT-base trained with DP-SGD under various noise $\sigma$ has similar performance on CIFAR10 (setting in \Cref{sec:cifar10}). Here `non-DP' means both $\sigma=0$ and no clipping. Notice that the loss curves for different $\sigma$ are very similar (though not the same) to each other, because we fix the random seed at the beginning of each iteration among different runs. This is to eliminate the potential difference from uncontrolled random realizations for fair comparison.}
\label{fig:noise not important}
\end{figure}

\subsection{Effect of Per-Sample Clipping on Convergence}
\label{sec:clipping_convergence}
We move on to analyze the effect of the per-sample clipping on the DP training \eqref{eq:DP-GD_gradient_flow}. It has been empirically observed that the per-sample clipping results in worse convergence and accuracy even without the noise \citep{bagdasaryan2019differential}. We highlight that the NTK matrix is the key to understanding the convergence behavior. Specifically, the per-sample clipping affects NTK through its linear algebra properties, especially the positive semi-definiteness, which we define below in two notions for a \textit{general} matrix.
\begin{definition}
For a (not necessarily symmetric) matrix $A$, it is
\begin{enumerate}
	\item \textit{positive in quadratic form} if and only if $\x^\top A \x\geq 0$ for every non-zero $\x$;
	\item \textit{positive in eigenvalues} if and only if all eigenvalues of $A$ are non-negative.
\end{enumerate}
\end{definition}

These two positivity definitions are equivalent for a symmetric or Hermitian matrix, but not so for non-symmetric matrices. We illustrate this difference in \Cref{sec:app_linear_algebra} with some concrete examples. Next, we introduce two styles of per-sample clippings, both can work with any clipping function.

\paragraph{Flat clipping style.}
The DP-GD described in \eqref{eq:DP_GD_discrete}, with the gradient flow \eqref{eq:DP-GD_gradient_flow}, is equipped with the \textit{flat} clipping \citep{mcmahan2018general}. In words, the flat clipping upper bounds the entire gradient vector by a norm $R$. Using the chain rules, we get
\begin{align}
\dot{L}=\frac{\partial L}{\partial \w}\dot \w=-\frac{1}{n^2}\sum_j\frac{\partial \ell_j}{\partial \w} \sum_i\frac{\partial \ell_i}{\partial \w} C_i=-\frac{\partial L}{\partial \f}\H\C\frac{\partial L}{\partial \f}^\top,
\label{eq:local_L_dynamic}
\end{align}
where $\C(t)=\text{diag}(C_1,\cdots,C_n)$ and $C_i(t)$ is defined in \Cref{sec:dp deep}.

\paragraph{Layerwise clipping style.} We additionally analyze another per-sample clipping style -- the \textit{layerwise} clipping \citep{abadi2016deep,mcmahan2017learning,phan2017adaptive}. Unlike the flat clipping, the layerwise clipping upper bounds the $r$-th layer's gradient vector by a layer-dependent norm $R_r$. Therefore, the DP-GD and its gradient flow with this layerwise clipping are:
\begin{align*}
\w_r(k+1)=\w_r(k)-\frac{\eta}{n}\left(\sum\nolimits_i\frac{\partial \ell_i}{\partial \w_r} C_{i,r}+\sigma R_r\cdot \mathcal{N}(0,1)\right) \text{\quad and } \dot{\w_r}(t)=-\frac{1}{n}\sum_i\frac{\partial \ell_i}{\partial \w_r}C_{i,r}.
\end{align*}
Then the loss dynamics is obtained by the chain rules: 
\begin{align}
\dot{L}=\sum_r\frac{\partial L}{\partial \w_r}\dot \w_r=-\frac{\partial L}{\partial \f}\sum_r\H_r\C_r\frac{\partial L}{\partial \f}^\top,
\label{eq:local_layerwise_L_dynamic}
\end{align} 
where the layerwise NTK matrix $\H_r = \frac{\partial \f}{\partial \w_r}\frac{\partial \f}{\partial \w_r}^\top$, and $\C_r(t)=\text{diag}(C_{1,r},\cdots,C_{n,r})$.

In short, from \eqref{eq:local_L_dynamic} and \eqref{eq:local_layerwise_L_dynamic}, the per-sample clipping precisely changes the NTK matrix from $\H\equiv\sum_r\H_r$, in the standard non-DP training, to $\H\C$ in DP training with flat clipping, and to $\sum_r\H_r\C_r$ in DP training with layerwise clipping. Subsequently, we will show that this may break the NTK's positivity and harm the convergence of DP training.
 

\subsection{Small Per-Sample Clipping Norm Breaks NTK Positivity}
We show that the small clipping norm $R$ breaks the positive semi-definiteness of the NTK matrix\footnote{It is a fact that the product of a symmetric and positive definite matrices and a positive diagonal matrix may not be symmetric nor positive in quadratic form. This is shown in \Cref{sec:app_linear_algebra}.}.
\begin{theorem}
\label{thm:local clipping no noise}
For an arbitrary neural network and a loss convex in $f$, suppose at least some per-sample gradients are clipped ($\exists i, C_i<1$) in the gradient flow of DP-GD, and assume $\H(t)\succ 0$, then:
\begin{enumerate}
    \item for flat clipping style, the loss dynamics is \eqref{eq:local_L_dynamic} and the NTK matrix is $\H(t)\C(t)$, which may not be symmetric nor positive in quadratic form, but is positive in eigenvalues.
    \item for layerwise clipping style, the loss dynamics is \eqref{eq:local_layerwise_L_dynamic} and the NTK matrix is $\sum_r\H_r(t)\C_{r}(t)$, which may not be symmetric nor positive in quadratic form or in eigenvalues.
	\item for both flat and layerwise clipping styles, the loss $L(t)$ may not decrease monotonically.
	\item if the loss $L(t)$ converges with $\dot L(t)\to 0$\footnote{Note that it is possible that $L(t)$ converges yet $\dot L(t)\not\to 0$, e.g. when uniform convergence is not satisfied.}, for the flat clipping style, it converges to 0; for the layerwise clipping style, it may converge to a non-zero value.
\end{enumerate}
\end{theorem}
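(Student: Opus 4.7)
The plan is to prove the four sub-claims in sequence, relying mainly on the chain rule and elementary linear algebra of the product of a symmetric positive (semi)definite matrix and a positive diagonal one. For the loss dynamics (Claims 1 and 2), substituting the local-clipping gradient flow \eqref{eq:DP-GD_gradient_flow} into $\dot L=\frac{\partial L}{\partial \w}\dot\w$ and applying the chain rule layer by layer yields \eqref{eq:local_L_dynamic} and \eqref{eq:local_layerwise_L_dynamic} directly. Positivity of the eigenvalues of $\H\C$ follows from the similarity $\H\C=\C^{-1/2}(\C^{1/2}\H\C^{1/2})\C^{1/2}$, which is well defined because $\C\succ\mathbf{0}$ (each local factor $C_i=\min\{1,R/\|\nabla_\w\ell_i\|\}$ is strictly positive whenever the per-sample gradient has finite norm); this conjugates $\H\C$ to the symmetric positive definite matrix $\C^{1/2}\H\C^{1/2}$, whose spectrum coincides with that of $\H\C$. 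Non-symmetry is immediate from $(\H\C)_{ij}=H_{ij}C_j\neq H_{ij}C_i=(\H\C)_{ji}$ whenever $H_{ij}\neq 0$ and $C_i\neq C_j$, and non-positivity of the quadratic form will be witnessed by a $2\times 2$ counterexample (for instance $\H=\mathbf{1}\mathbf{1}^\top+\varepsilon\I$ together with $\C=\mathrm{diag}(1,4)$), whose symmetric part $\tfrac{1}{2}(\H\C+\C\H)$ has a negative eigenvalue.

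For Claim 2, each individual summand $\H_r\C_r$ still has nonnegative real eigenvalues by the same similarity argument, but the sum can fail both forms of positivity. I will construct a two-layer, two-sample example with rank-one PSD factors (for instance $\H_1=\mathbf{1}\mathbf{1}^\top$ and $\H_2=(1,-1)^\top(1,-1)$) together with suitably chosen positive diagonal $\C_r$, so that $\sum_r\H_r\C_r$ possesses a pair of complex-conjugate eigenvalues together with an indefinite symmetric part; the verification reduces to computing the trace, determinant, and symmetric part of a single $2\times 2$ matrix.

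Claim 3 follows as a direct consequence of Claims 1 and 2: since the relevant matrix $M\in\{\H\C,\sum_r\H_r\C_r\}$ is not positive in quadratic form, there exists a nonzero $g$ with $g^\top M g<0$. Arranging the network so that $(\partial L/\partial\f)^\top$ is proportional to such $g$ at some instant $t_0$ (realizable via initialization, since $\partial L/\partial\f$ is determined by the per-sample residuals) produces $\dot L(t_0)>0$, ruling out monotone decay.

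For Claim 4, the analysis hinges on terminal-phase stationarity. For flat clipping, if $L\to L^*$ along a bounded trajectory, a Barbalat-type argument (using uniform continuity of $\dot L$ on the compact orbit) gives $\dot\w(t)\to\mathbf{0}$, so $\sum_i C_i\,\ell_i'(f_i)\,\nabla_\w f_i\to\mathbf{0}$. Because $\H\succ 0$ is equivalent to the linear independence of $\{\nabla_\w f_i\}_i$, every coefficient $C_i\,\ell_i'(f_i)$ must vanish in the limit, and the lower bound $C_i\geq R/\max_j\|\nabla_\w\ell_j\|>0$ (from boundedness of the orbit) forces $\ell_i'(f_i)\to 0$ for every $i$; convexity of $\ell$ in $f$, together with the standard assumption that the per-sample loss attains minimum $0$, then yields $L(\infty)=0$. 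For layerwise clipping the analogous stationarity is only layer-local, $\sum_i C_{i,r}\,\ell_i'(f_i)\,\nabla_{\w_r}f_i=\mathbf{0}$ for each $r$, and since each $\H_r$ is only positive semidefinite, the per-layer gradient vectors can be linearly dependent, so $\ell_i'(f_i)\neq 0$ remains possible; a small explicit two-sample, two-layer network in which such cancellation holds at a strictly positive $L$ then serves as the required counterexample. The main obstacle I anticipate is making the Barbalat step in Claim 4 rigorous without assuming weight convergence, which will require standard regularity of the gradient flow; a secondary subtlety is engineering the layerwise construction in Claim 2 so that the sum $\sum_r\H_r\C_r$ has genuinely complex eigenvalues while its symmetric part is indefinite.
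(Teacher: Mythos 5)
Your proposal is correct and follows essentially the same route as the paper: the loss dynamics come from the chain rule, positivity in eigenvalues of $\H\C$ comes from a similarity to a symmetric positive definite matrix (you conjugate by $\C^{1/2}$ where the paper uses $\H^{1/2}$; both work), explicit $2\times 2$ counterexamples witness the failures of symmetry, quadratic-form positivity, and eigenvalue positivity of $\sum_r\H_r\C_r$, and Claim 4 is handled by a stationarity-plus-invertibility argument (your Barbalat step is a more careful version of the paper's bare assertion that $\dot\f=0$ at convergence). Two details to fix when writing it out: rescale $\C=\mathrm{diag}(1,4)$ into $(0,1]$ so it is a legal clipping matrix (the sign of the quadratic form is scale-invariant, so nothing is lost), and note that your rank-one pair $\H_1=\mathbf{1}\mathbf{1}^\top$, $\H_2=(1,-1)^\top(1,-1)$ always has $\det\big(\sum_r\H_r\C_r\big)=2(ad+bc)>0$ and admits no nonzero $g$ with $\C_r g\in\ker\H_r$ for both $r$, so the Claim 4 layerwise counterexample needs a different kernel alignment (e.g., $u_1=(1,-1)$ with $\C_1=\I$ and $u_2=(1,-2)$ with $\C_2=\mathrm{diag}(1,1/2)$, both annihilating $g=(1,1)$ while $\H_1+\H_2\succ 0$).
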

We prove \Cref{thm:local clipping no noise} in \Cref{sec:app_proofs}, which states that the symmetry of NTK is almost surely broken by the clipping using small clipping norm. If furthermore the positive definiteness of NTK is broken, then severe issues may arise in the loss convergence, which is depicted in \Cref{fig:noise not important} and \Cref{fig:BERT_loss}. 

\subsection{Large Per-Sample Clipping Norm Preserves NTK Positivity}
\label{sec:global_clipping_NTK_analysis}
Now we switch gears to large clipping norm $R$. Suppose at each iteration, $R$ is sufficiently large so that no per-sample gradient is clipped ($C_i=1$), i.e. the per-sample clipping is not effective. Thus, the gradient flow of DP-GD is the same as that of non-DP GD. Hence we obtain the following result.

\begin{theorem}
\label{thm:global clipping no noise}
For an arbitrary neural network and a loss convex in $f$, suppose none of the per-sample gradients are clipped ($\forall i, C_i=1$) in the gradient flow of DP-GD, and assuming $\H(t)\succ 0$, then:
\begin{enumerate}
	\item for both flat and layerwise clipping styles, the loss dynamics is \eqref{eq:dotL_dynamic} and the NTK matrix is $\H(t)$, which is symmetric and positive definite.
	\item if the loss $L(t)$ converges with $\dot L(t)\to 0$, for both flat and layerwise clipping styles, the loss $L(t)$ decreases monotonically to 0.
\end{enumerate}
\end{theorem}
We prove \Cref{thm:global clipping no noise} in \Cref{sec:app_proofs} and the benefits of large clipping norm are assessed in \Cref{sec:experiments}. Our findings from \Cref{thm:local clipping no noise} and \Cref{thm:global clipping no noise} are visualized in the left plot of \Cref{fig:regression_loss} and summarized in \Cref{table:table1}.

\begin{table}[!htb]
\vspace{-0.2cm}
\centering	
\resizebox{\linewidth}{!}{
\begin{tabular}{|c|c|c|c|c|c|c|c|}
\hline
\multirow{2}{*}{Clipping type}& NTK&Symmetric&Positive in&Positive in& Loss&Monotone & To zero
\\
& matrix&NTK&quadratic form&eigenvalues&convergence&loss decay&loss
\\\hline
No clipping	&$\H\equiv\sum_r \H_r$&\cmark&\cmark&\cmark&\cmark&\cmark&\cmark
\\\hline
Batch clipping&$c\H\equiv c\sum_r \H_r$&\cmark&\cmark&\cmark&\cmark&\cmark&\cmark
\\\hline
Large $R$ clipping&$\H\equiv\sum_r \H_r$&\cmark&\cmark&\cmark&\cmark&\cmark&\cmark
\\(Flat \& layerwise)&&&&&&&
\\\hline
Small $R$ clipping &$\H\C$&\xmark&\xmark&\cmark&\xmark&\xmark&\cmark
\\(Flat)&&&&&&&
\\\hline
Small $R$ clipping &$\sum_r \H_r \C_r$&\xmark&\xmark&\xmark&\xmark&\xmark&\xmark
\\(Layerwise)&&&&&&&
\\\hline
\end{tabular}
}
\vspace{-0.3cm}
	\caption{Effects of per-sample gradient clipping on gradient flow. Here ``Yes/No" means guaranteed or not and the loss refers to the training set. ``Loss convergence" is conditioned on $\H(t)\succ 0$.}
	\label{table:table1}
\vspace{-0.4cm}
\end{table}

\subsection{Connection to Bayesian Deep Learning}
\label{sec:bayesian}
When $R$ is sufficiently large, all per-sample gradients are not clipped ($C_i=1, \forall i$), and DP-SGD is essentially the SGD with independent Gaussian noise. This is indeed the SGLD (with a different learning rate) that is commonly used to train Bayesian neural networks.
\begin{align*}
&\text{DP-SGD:}\quad\w(k+1)-\w(k)=-\frac{\eta_\text{DP-SGD}}{B}\left(\sum_i\frac{\partial l_i}{\partial \w}+\sigma R\cdot\mathcal{N}(0,I)\right),
\\
&\text{SGLD:}\quad\w(k+1)-\w(k)=-\frac{\eta_\text{SGLD}n}{2B}\left(\sum_i\frac{\partial l_i}{\partial \w}\right)+\sqrt{\eta_\text{SGLD}}\mathcal{N}(0,I),
\end{align*}
where $n$ is the total number of samples and $B$ is mini-batch size. Clearly, DP-SGD (with the right combination of hyperparameters) is a special form of SGLD by setting $\eta_\text{DP-SGD}=\eta_\text{SGLD}n/2$ and $\sigma R\frac{n}{2B}=1/\sqrt{\eta_\text{SGLD}}$.

Similarly, DP-HeavyBall with large $R$ can be viewed as stochastic gradient Hamiltonian Monte Carlo. This equivalence relation opens new doors to understanding DP optimizers by borrowing the rich literature from the Bayesian learning. Especially, the uncertainty quantification of Bayesian neural network implies the amazing calibration of large-$R$ DP optimization in \Cref{sec:experiments}.

\section{Discrete-time DP Optimization: privacy, accuracy, calibration}
\label{sec:global}

Now, we focus on the more practical analysis when the learning rate $\eta$ is not infinitely small, i.e. the \textit{discrete time} analysis. In this regime, the gradient flow in \eqref{eq:DP-GD_gradient_flow} may deviate from the dynamics of the actual training, especially when the added noise is not small, e.g. when the privacy budget $\epsilon$ is stringent and thus requires a large $\sigma$. 

Nevertheless, state-of-the-art DP accuracy can be achieved under settings that is well-approximated by our gradient flow. For example, large pre-trained models such as GPT2 (0.8 billion parameters) \citep{bu2022automatic,li2021large} and ViT (0.3 billion parameters) \citep{buscalable} are typically trained using small learning rates around 0.0001. In addition, the best DP models are trained with large batch size $n$, e.g. \citep{li2021large} have used a batch size 6000 to train RoBERTa on MNLI and QQP datasets, and \citep{kurakin2022toward,de2022unlocking,mehta2022large} have used batch sizes $n$ from $10^4$ to $10^6$, i.e. full batch, to achieve state-of-the-art DP accuracy on ImageNet. These settings all result in very small noise magnitude $\eta\sigma R/n$ in the optimization\footnote{Here the noise magnitude discussed is per parameter. It is empirically verified that the total noise magnitude for models with millions of parameters can be also small or even dimension-independent when the gradients are low-rank \citep{li2022does}.}, so that the noise has small effects on the accuracy (and the calibration), as illustrated in \Cref{fig:noise not important}. Consequently, we focus on only analyzing the effect of different clipping norms $R$. 
\subsection{Privacy analysis}
From \Cref{lem:sensitivity and mechanism}, we highlight that DP optimizers with all clipping norms \textit{have the same privacy guarantee}, independent of the choice of the privacy accountant, because the privacy risk $\epsilon$ only depends on the noise scale $\sigma$ (i.e. the noise-to-sensitivity ratio). We summarize this common fact in \Cref{thm:DP}, which motivates the ablation study on $R$ in most literature of DP deep learning. Consequently, one can use a larger clipping norm that benefits the calibration, while remaining equally DP as using a smaller clipping norm.

\begin{fact}[\cite{abadi2016deep}]\label{thm:DP}
DP optimizers with the same noise scale $\sigma$ are equally $(\epsilon(\sigma),\delta(\sigma))$-DP, independent of the choice of the clipping norm $R$.
\end{fact}

\begin{proof}[Proof of \Cref{thm:DP}]
Firstly, we show that the privatized gradient in \eqref{eq:private grad} has a privacy guarantee that only depends on $\sigma$, not $R$, regardless of which privacy accountant is adopted. This can be seen because (1) the sum of per-sample clipped gradient $\sum_i C_i \g^{(i)}_t$ has a sensitivity of $\max_{i\in B_t}\|C_i \g_t^{(i)}\|\leq R$ by the triangular inequality, and (2) the noise $\sigma R$ is proportional to $R$ and hence fixing the noise-to-signal ratio at $\sigma R/R=\sigma$, regardless of the choice of $R$. Therefore, the privacy guarantee is the same and independent of $R$. Secondly, it is well-known that the post-processing of a DP mechanism is equally DP, thus any optimizer (e.g. SGD or Adam) that leverages the same privatized gradient in \eqref{eq:private grad} has the same DP guarantee.
\end{proof}



\subsection{Accuracy and Calibration}
\label{sec:experiments}
In the following sections, we reveal a novel phenomenon that DP optimizers play important roles in producing well-calibrated and reliable models. 

In $M$-class classification problems, we denote the probability prediction for the $i$-th sample as $\bm\pi_i\in\R^M$ so that $f(\x_i)=\text{argmax}(\bm\pi_i)$, then the accuracy is $\mathbf{1}\{f(\x_i)=y_i\}$. The confidence, i.e., the probability associated with the predicted class, is $\hat P_i:=\max_{k=1}^M[\bm\pi_i]_k$ and a good calibration means the confidence is close to the accuracy\footnote{An over-confident classifier, when predicting wrong at one data point, only reduces its accuracy a little but increases its loss significantly due to large $-\log(\pi_{y_i})$, since too little probability is assigned to the true class.}. Formally, we employ three popular calibration metrics from \citep{naeini2015obtaining}: the test loss, i.e. the negative log-likelihood (NLL), the Expected Calibration Error (ECE), and the Maximum Calibration Error (MCE).
$$\text{ECE:}\quad \mathbb{E}_{\hat P_i}\left[\Big|\mathbb{P}(f(\x_i)=y_i|\hat P_i=p)-p\Big|\right],
\quad\quad
\text{MCE:}\quad\max_{p\in[0,1]}\Big|\mathbb{P}(f(\x_i)=y_i|\hat P_i=p)-p\Big|.$$

\begin{figure}[!htb]
\centering
\vspace{-0.5cm}
\hspace{-0.7cm}
\includegraphics[width=0.36\linewidth]{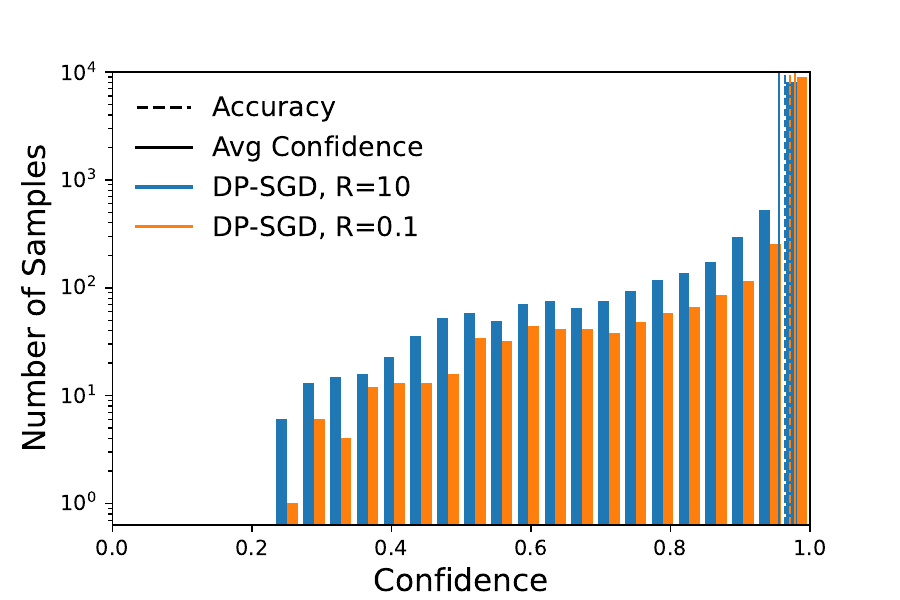}
\hspace{-0.61cm}
\includegraphics[width=0.36\linewidth]{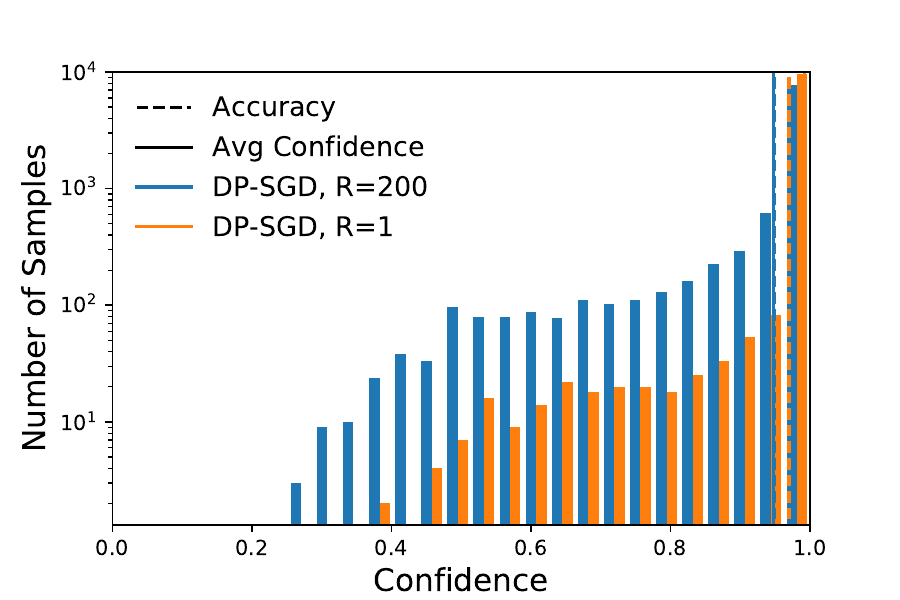}
\hspace{-0.61cm}
\includegraphics[width=0.36\linewidth]{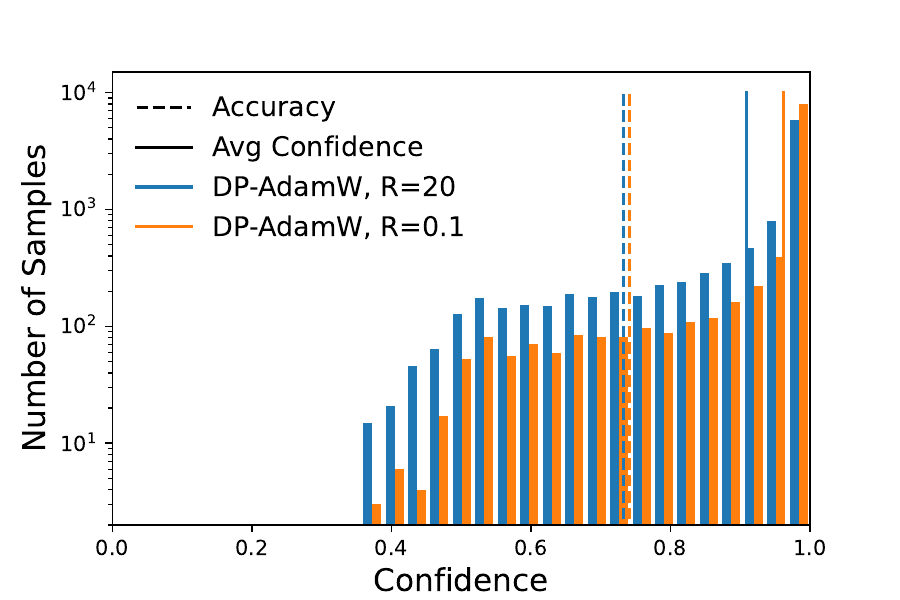}
\hspace{-1cm}
\caption{Confidence histograms on CIFAR 10 (left), MNIST (middle), and SNLI (right).}
\label{fig:calibration_confidences}
\end{figure}

\begin{table}[H]
\begin{minipage}{\textwidth}
\centering
\begin{tabular}{|l|lll|lll|}
\hline
        &  & ECE \%   &        &  & MCE \%  &        \\ \cline{2-7} 
        & \multicolumn{1}{l|}{non-DP} & \multicolumn{1}{l|}{DP (small $R$)} & DP  (large $R$)& \multicolumn{1}{l|}{non-DP} & \multicolumn{1}{l|}{DP  (small $R$)} & DP (large $R$)  \\ \hline
CIFAR10 & \multicolumn{1}{l|}{1.3}       & \multicolumn{1}{l|}{0.9}      &  1.1      & \multicolumn{1}{l|}{54.8}       & \multicolumn{1}{l|}{58.6}      & 27.5       \\ \hline
MNIST   & \multicolumn{1}{l|}{0.4}       & \multicolumn{1}{l|}{2.3}      &  0.7      & \multicolumn{1}{l|}{49.3}       & \multicolumn{1}{l|}{56.2}      &   33.4     \\ \hline
SNLI    & \multicolumn{1}{l|}{13.0}       & \multicolumn{1}{l|}{22.0}      &   17.6*      & \multicolumn{1}{l|}{34.7}       & \multicolumn{1}{l|}{62.5}      & 28.9*       \\ \hline
\end{tabular}
\caption{Calibration metrics ECE and MCE by non-DP (no clipping) and DP optimizers. *Note that the SNLI experiment uses the mix-up training as described in \Cref{sec:bert experiment}.}
\label{table:calibration}
\end{minipage}
\end{table}

Throughout this paper, we use the GDP privacy accountant for the experiments, with \texttt{Private Vision} library \citep{buscalable} (improved on \texttt{Opacus}) and one P100 GPU. We cover a range of model architectures (including convolutional neural networks [CNN] and Transformers), batch sizes (from 32 to 1000), datasets (with sample size from 50,000 to 550,152), and tasks (including image and text classification). More details are available in \Cref{sec:app_experments_details}. 

\subsection{CIFAR10 image data with Vision Transformer}
\label{sec:cifar10}
\begin{figure}[!htb]
\centering
\hspace{-0.7cm}
\includegraphics[width=0.33\linewidth]{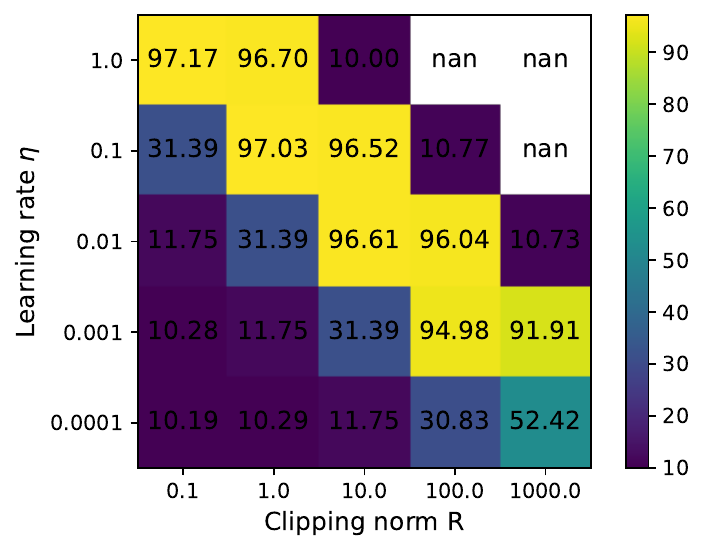}
\includegraphics[width=0.33\linewidth]{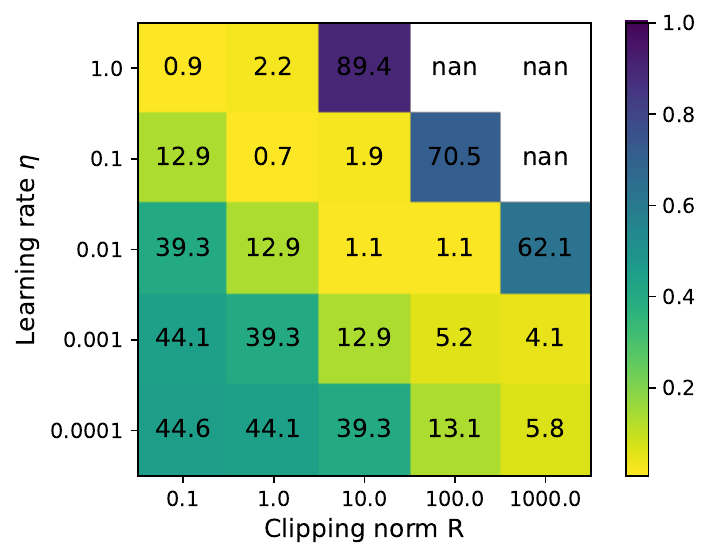}
\includegraphics[width=0.33\linewidth]{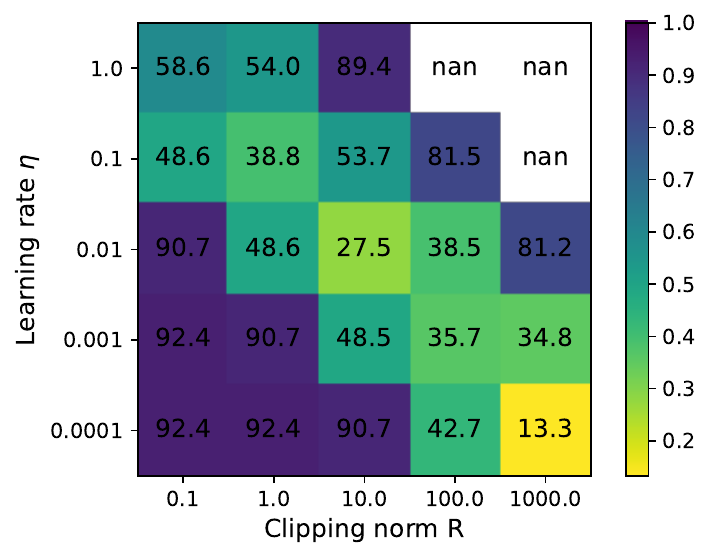}
\hspace{-0.5cm}
\caption{Ablation study on the accuracy, ECE and MCE (left to right) of CIFAR10 with ViT-base.}
    \label{fig:cifar_ablation}
\end{figure}

\begin{figure}[!htb]
\centering
\vspace{-0.4cm}
\hspace{-0.7cm}
\includegraphics[width=0.33\linewidth]{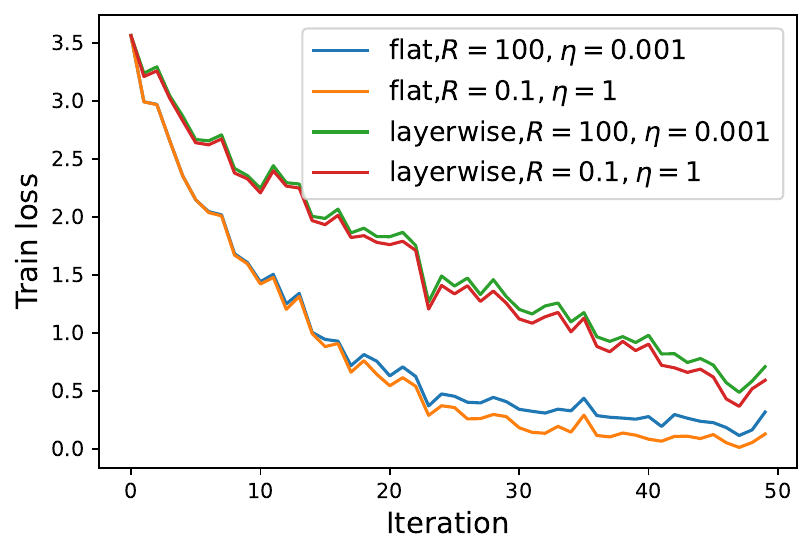}
\includegraphics[width=0.33\linewidth]{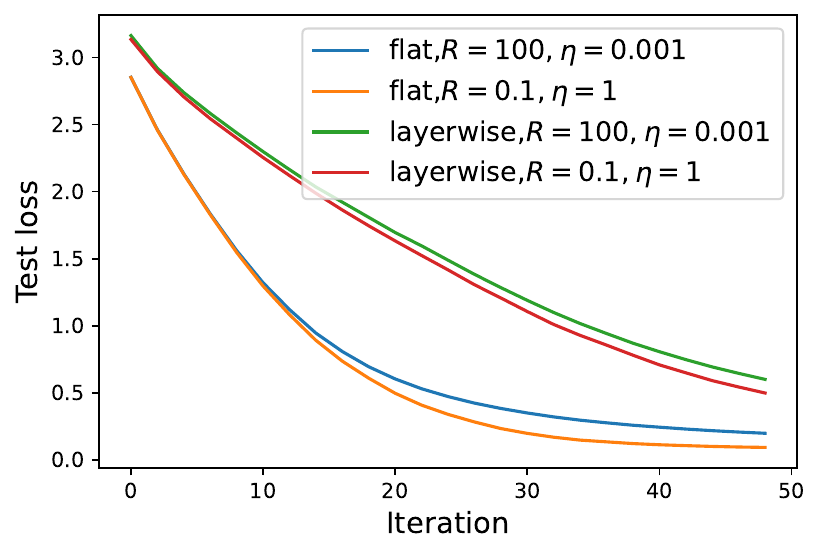}
\includegraphics[width=0.33\linewidth]{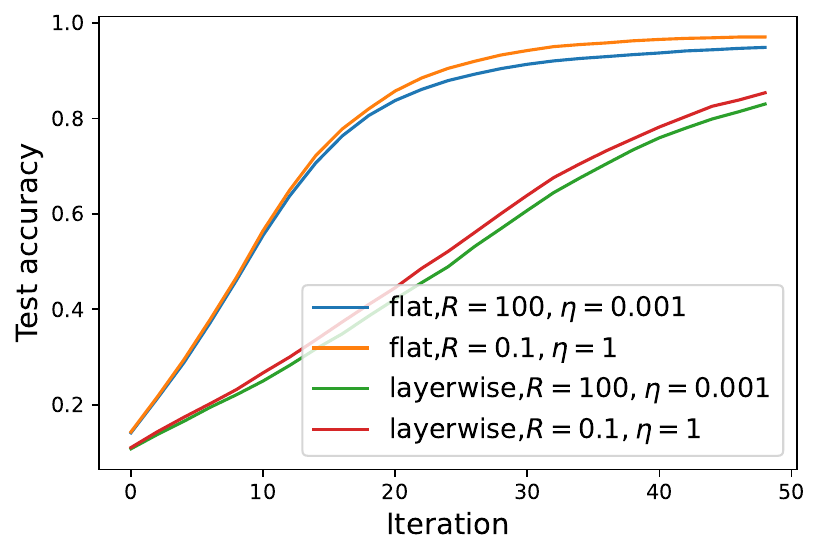}
\hspace{-0.5cm}
\caption{Performance on CIFAR10 with ViT-base, batch size 1000, noise scale 1.3, $(\epsilon,\delta)=(1.96,10^{-5})$.
}
\label{fig:CIFAR10_loss}
\end{figure}

CIFAR10 is an image dataset, which contains 50000 training samples and 10000 test samples of $32\times 32$ color images in 10 classes. We use the Vision Transformer (ViT-base, 86 million parameters) which is pre-trained on ImageNet and train with DP-SGD for a single epoch. This is one of state-of-the-art models for this DP task \citep{buscalable,bu2022differentially}. From \Cref{fig:cifar_ablation}\footnote{Note that the ablation study of $(\eta,R)$ is necessary and well-applied on DP optimization (see Figure 8 in \citep{li2021large} and Figure 1 in \citep{bu2022automatic}). Thus, besides the evaluation of accuracy, additionally evaluating the calibration error is almost free.}, the best accuracy is achieved along the diagonal by small $R$ and large $\eta$, a phenomenon that is commonly observed in \citep{li2021large,bu2022automatic}. However, the calibration error (especially the MCE) is worse than the standard training in \Cref{table:calibration} and \Cref{fig:calibration_confidences}. 
Additionally, the layerwise clipping can further slow down the optimization, as indicated by \Cref{thm:local clipping no noise}. We highlight that we choose $(R,\eta)$ proportioanlly, so that the total noise magnitude $\eta\sigma R$ is fixed for different hyperparameters. 

\begin{figure}[!htb]
\centering
\hspace{-0.7cm}
\includegraphics[width=0.36\linewidth]{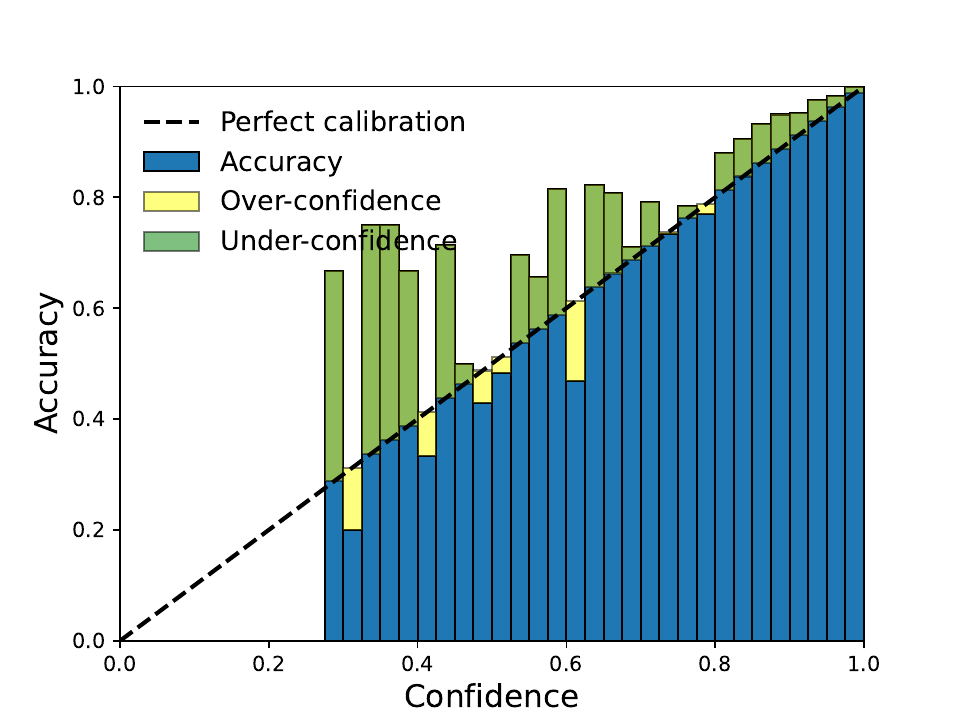}
\hspace{-0.59cm}
\includegraphics[width=0.36\linewidth]{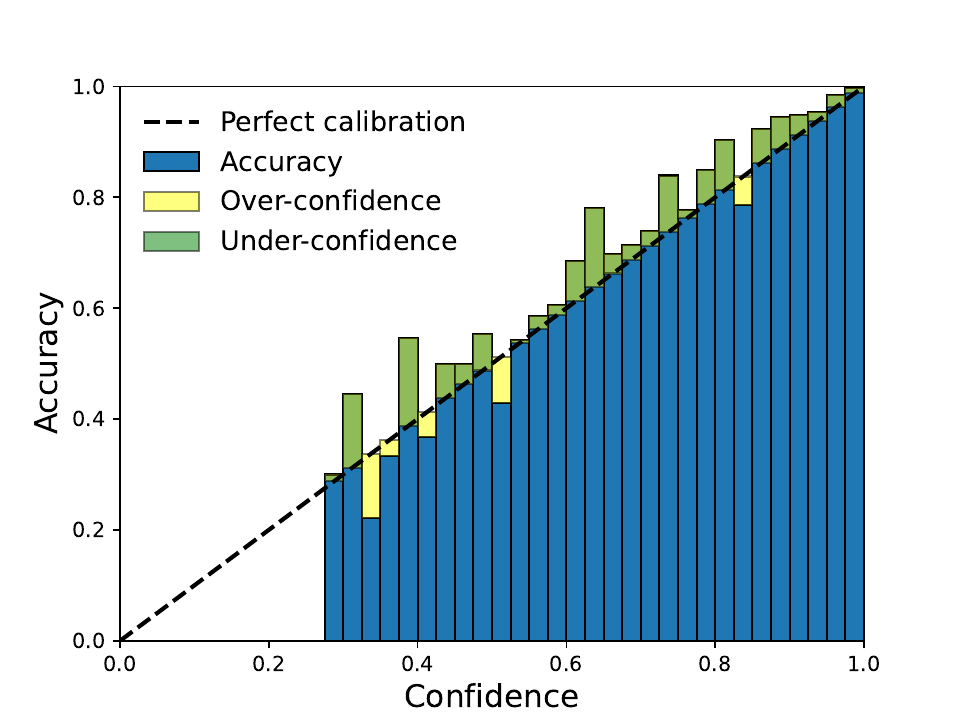}
\hspace{-0.59cm}
\includegraphics[width=0.36\linewidth]{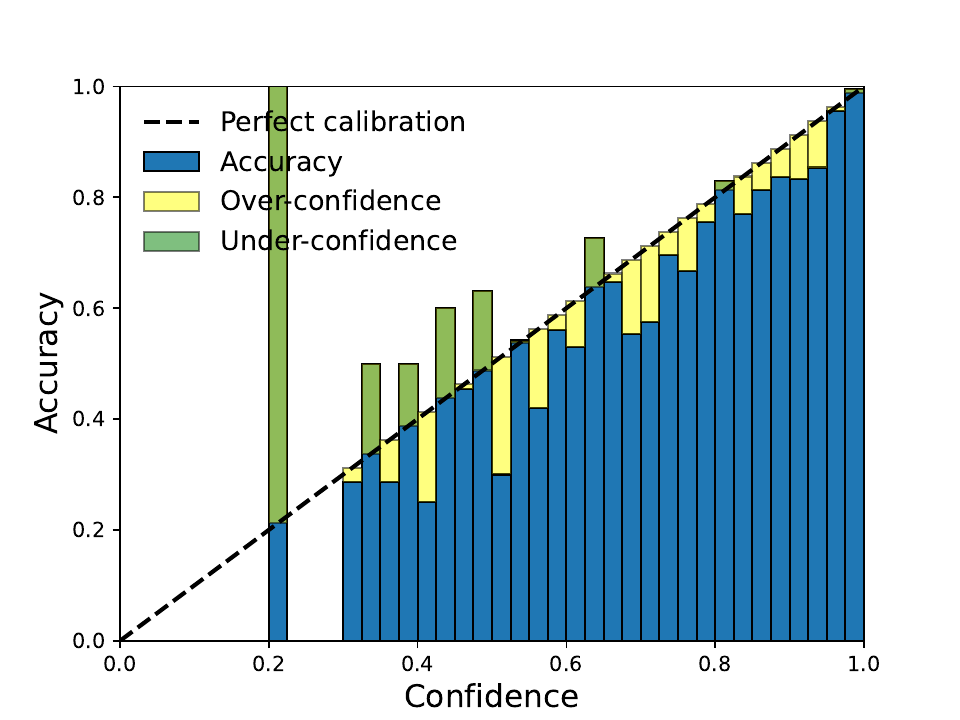}
\hspace{-1cm}
\caption{Reliability diagrams (left for non-DP; middle for DP with large $R$; right for DP with small $R$) on CIFAR10 with ViT-base.}
\label{fig:CIFAR10_confidence}
\vspace{-0.2cm}
\end{figure}

On the other hand, DP training with larger $R$ can lead to significantly better calibration errors, while incurring a negligible reduction in the accuracy ($97.17\to 96.61\%$). In \Cref{fig:CIFAR10_confidence}, the \textit{reliability diagram} \citep{degroot1983comparison,niculescu2005predicting} displays the accuracy as a function of confidence. Graphically speaking, a calibrated classifier is expected to have blue bins close to the diagonal black dotted line. While the non-DP model is generally over-confident and thus not calibrated, the large $R$ clipping effectively achieves nearly perfect calibration, thanks to its Bayesian learning nature. In contrast, the classifier with small $R$ clipping is not only mis-calibrated, but also falls into `bipolar disorder': it is either over-confident and inaccurate, or under-confident but highly accurate. This disorder is observed to different extent in all experiments in this paper.

\subsection{MNIST image data with CNN model}
\label{sec:mnist}

On the MNIST dataset, which contains 60000 training samples and 10000 test samples of $28\times 28$ grayscale images in 10 classes, 
we use the standard CNN in the DP libraries\footnote{See \url{https://github.com/tensorflow/privacy/tree/master/tutorials} in Tensorflow and \url{https://github.com/pytorch/opacus/blob/master/examples/mnist.py} in Pytorch \texttt{Opacus}.}\citep{TFprivacygithub,PTprivacygithub} (see \Cref{sec:app_MNIST} for architecture) and train with DP-SGD but without pre-training. In \Cref{fig:mnist loss}, DP training with both clipping norms is $(2.32,10^{-5})$-DP, and has similar test accuracy (96\% for small $R$ and 95\% for large $R$), though the large $R$ leads to smaller loss (or NLL). In the right plot of \Cref{fig:mnist loss}, we demonstrate how $R$ affects the accuracy and calibration, ceteris paribus, showing a clear accuracy-calibration trade-off based on 5 independent runs. Similar to \Cref{fig:CIFAR10_confidence}, large $R$ training again mitigates the mis-calibration in \Cref{fig:MNIST_confidence}.

\begin{figure}[!htb]
    \vspace{-0.3cm}
\centering
\hspace{-0.7cm}
\includegraphics[width=0.35\linewidth]{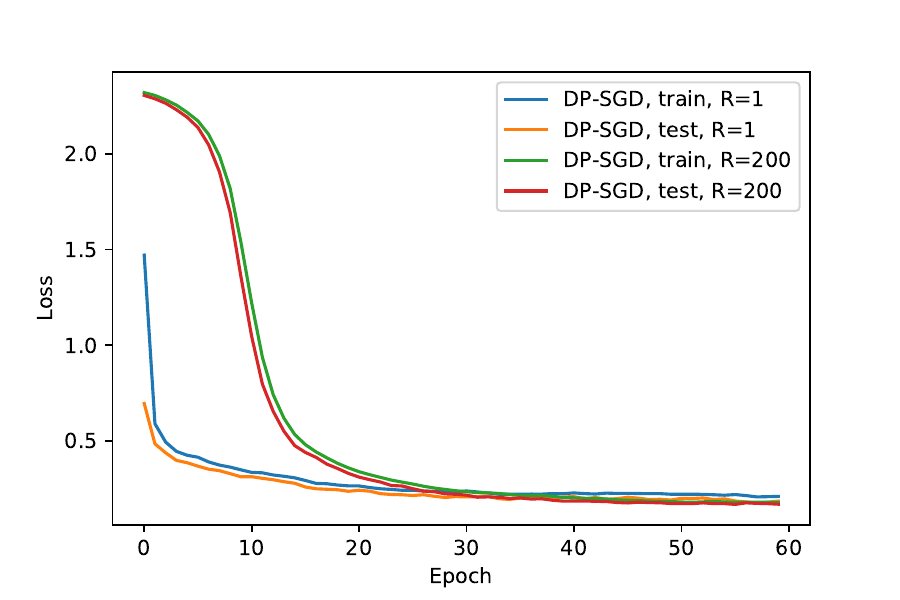}
\hspace{-0.59cm}
\includegraphics[width=0.35\linewidth]{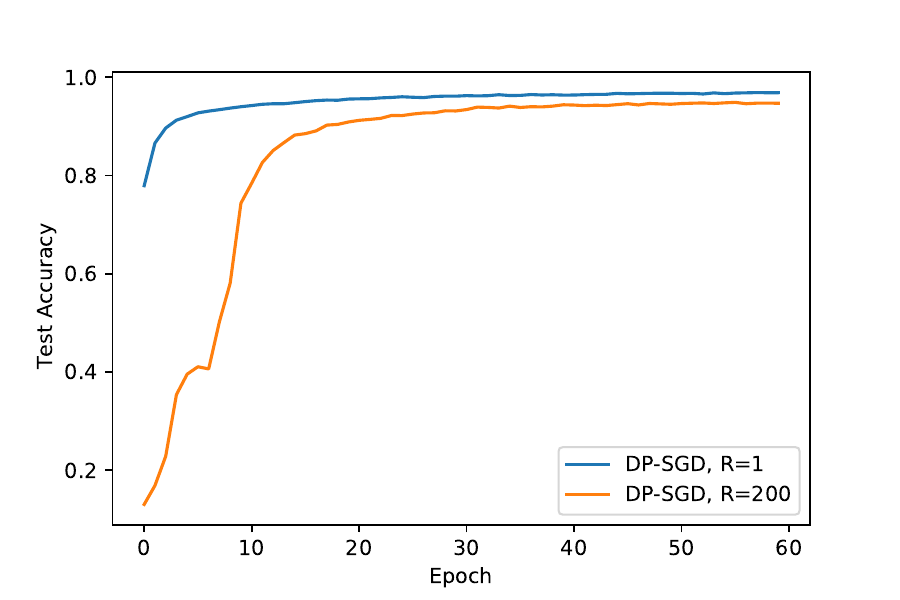}
\hspace{-0.59cm}
\includegraphics[width=0.35\linewidth]{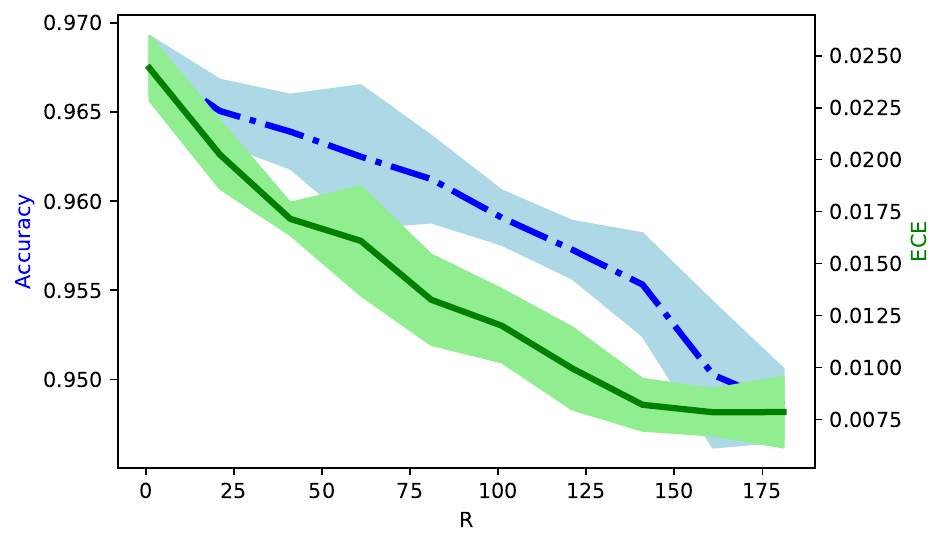}
\hspace{-0.7cm}
\caption{Loss (left), accuracy (middle), accuracy with ECE (right) on MNIST with 4-layer CNN under different clipping norms $R$, batch size 256, noise scale 1.1, learning rate $0.15/R$ for each $R$.}
    \label{fig:mnist loss}
\end{figure}

\begin{figure}[!htb]
\vspace{-0.4cm}
\centering
\hspace{-0.7cm}
\includegraphics[width=0.36\linewidth]{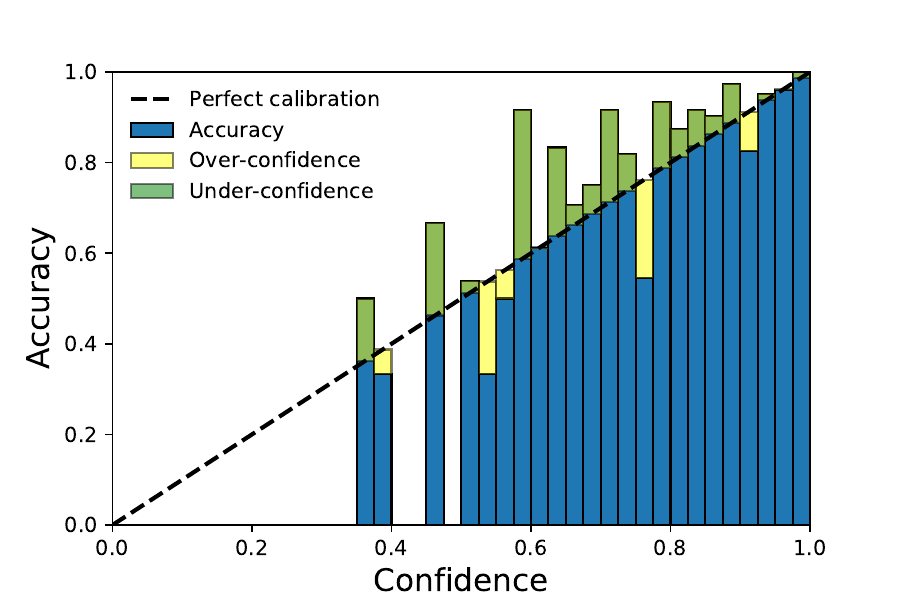}
\hspace{-0.59cm}
\includegraphics[width=0.36\linewidth]{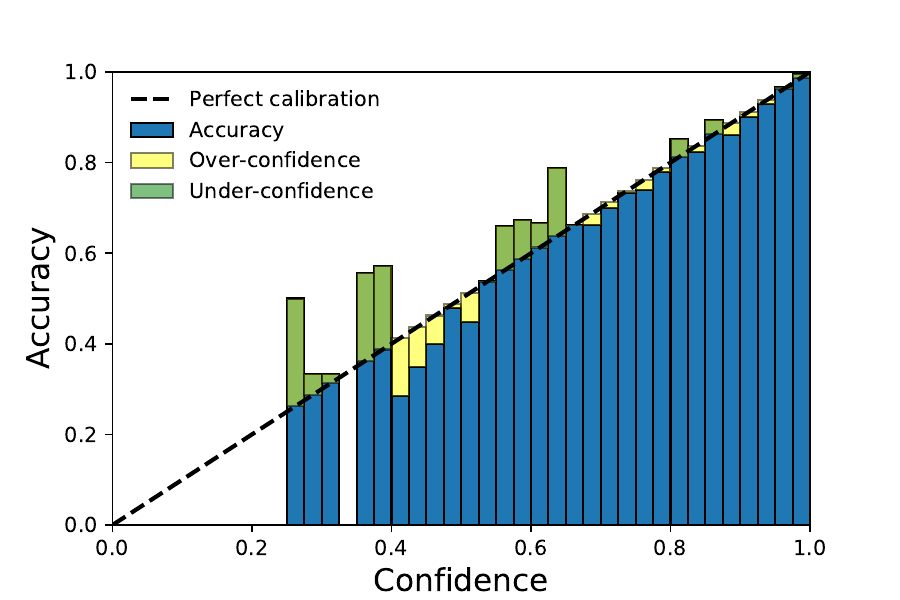}
\hspace{-0.59cm}
\includegraphics[width=0.36\linewidth]{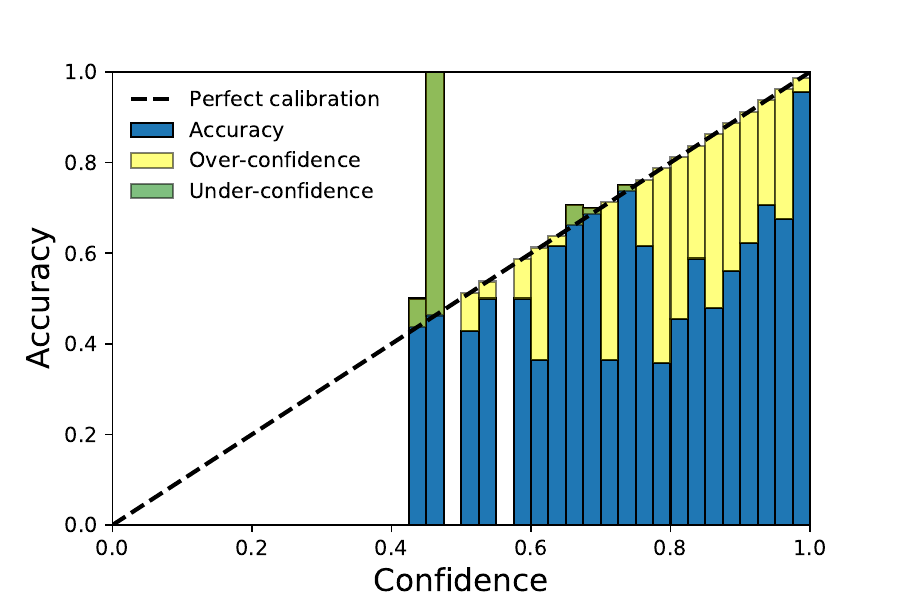}
\hspace{-1cm}
\caption{Reliability diagrams (left for non-DP; middle for large $R=200$; right for small $R=1$) on MNIST with 4-layer CNN.}
\label{fig:MNIST_confidence}
\end{figure}

\subsection{SNLI text data with BERT and mix-up training}\label{sec:bert experiment}
\begin{figure}[H]
\vspace{-0.4cm}
\centering
\hspace{-0.7cm}
\includegraphics[width=0.36\linewidth]{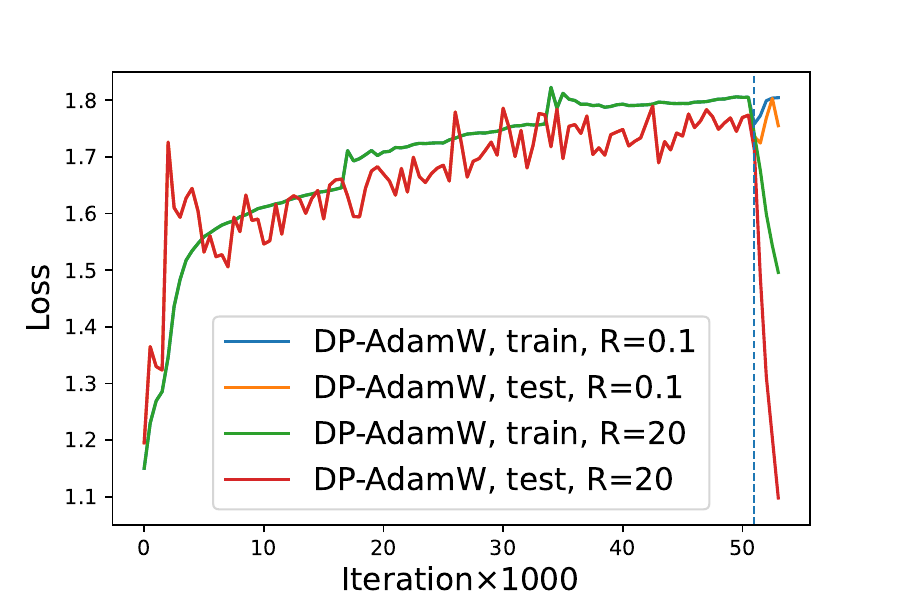}
\hspace{-0.6cm}
\includegraphics[width=0.36\linewidth]{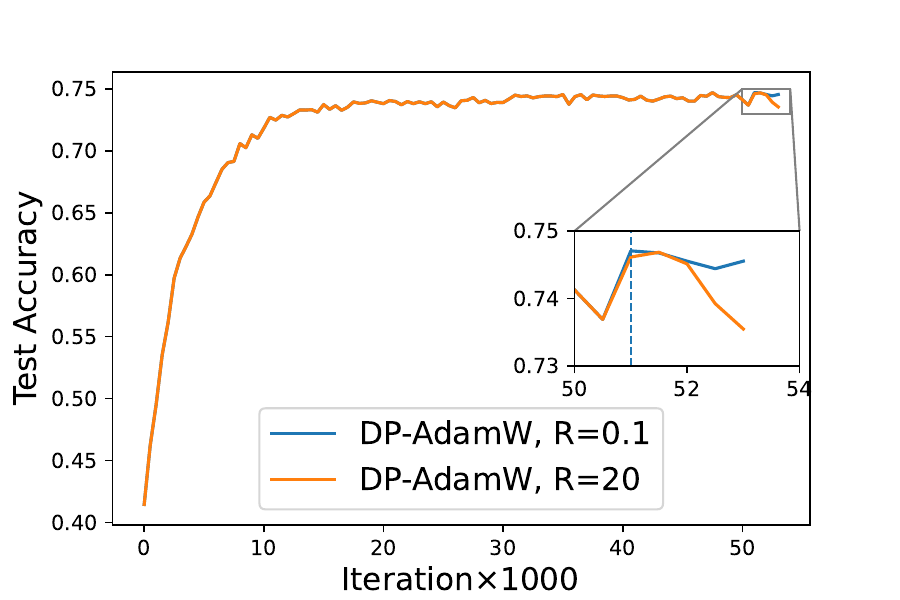}
\hspace{-0.6cm}
\includegraphics[width=0.36\linewidth]{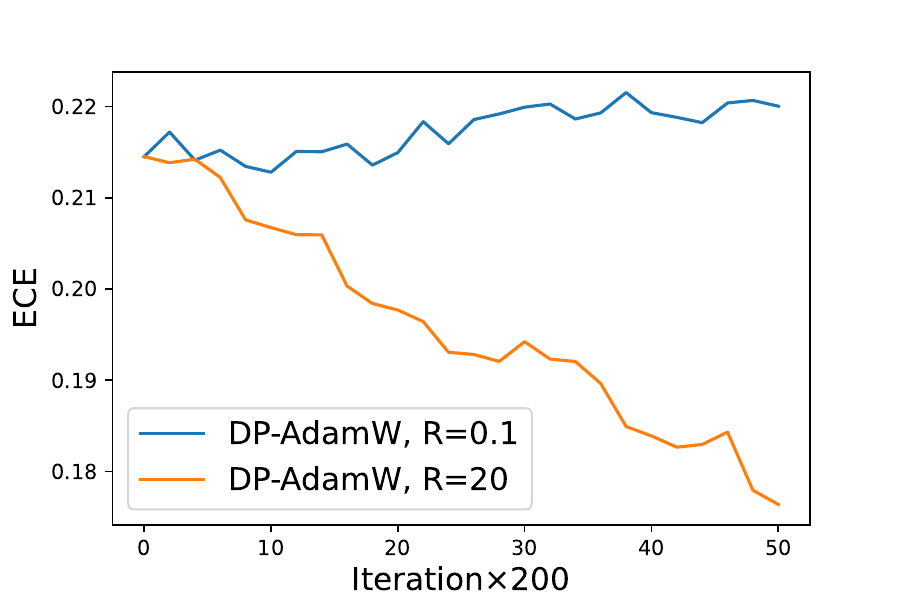}
\hspace{-1cm}
\caption{Loss (left), accuracy (middle) and calibration on SNLI with pre-trained BERT, batch size 32, learning rate 0.0005, noise scale 0.4, clipping norm are 0.1 or 20,
$(\epsilon,\delta)=(1.25,1/550152)$.}
\label{fig:BERT_loss}
\end{figure}

Stanford Natural Language Inference (SNLI) \footnote{We use SNLI 1.0 from \url{https://nlp.stanford.edu/projects/snli/}} is a collection of human-written English sentence paired with one of three classes: entailment, contradiction, or neutral. The dataset has 550152 training samples and 10000 test samples. We use the pre-trained BERT (Bidirectional Encoder Representations from Transformers) 
on \texttt{Opacus} tutorial\footnote{See \url{https://github.com/pytorch/opacus/blob/master/tutorials/building_text_classifier.ipynb}.}, which gives a state-of-the-art privacy-accuracy result. Our BERT contains 108M parameters and we only train the last Transformer encoder, which has 7M parameters, using DP-AdamW. In particular, we use a \textbf{mix-up training}: we in fact train BERT with small $R$ for 3 epochs ($51.5\times 10^3$ iterations, i.e. 95\% of the training) and then use large $R$ for an additional 2500 iterations (the last 5\% of the training). For comparison, we also train the same model with small $R$ for the entire training process of 54076 iterations.
\begin{figure}[H]
\vspace{-0.4cm}
\centering
\hspace{-0.7cm}
\includegraphics[width=0.36\linewidth]{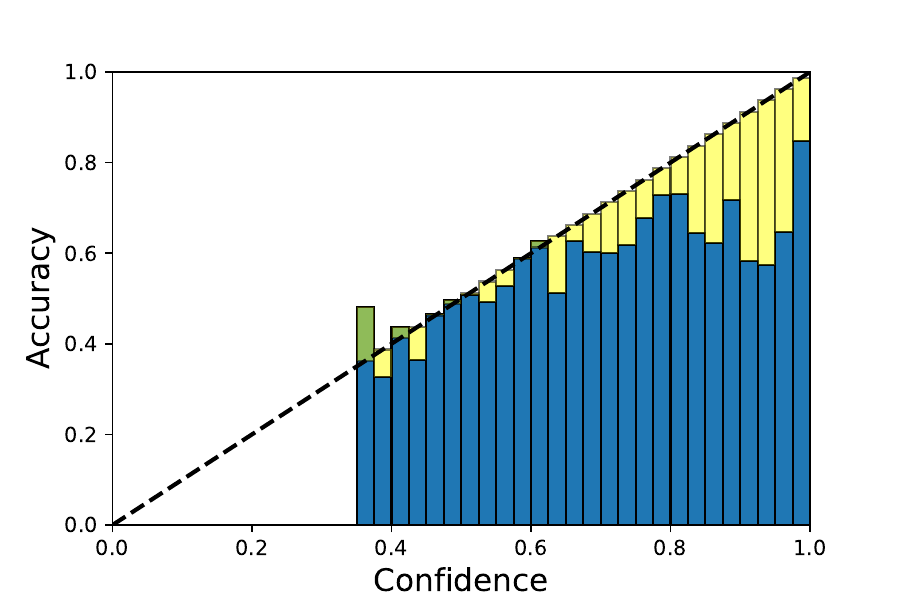}
\hspace{-0.6cm}
\includegraphics[width=0.36\linewidth]{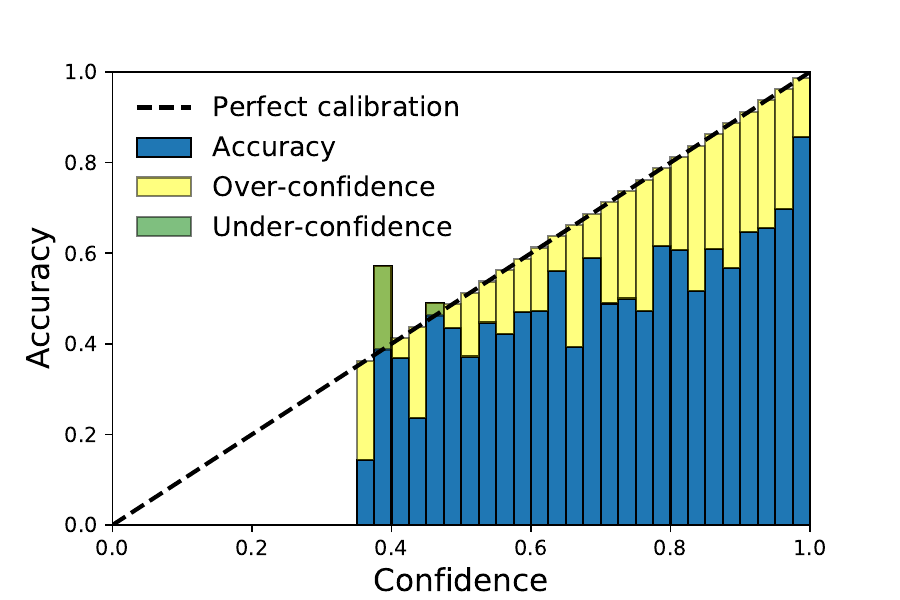}
\hspace{-0.6cm}
\includegraphics[width=0.36\linewidth]{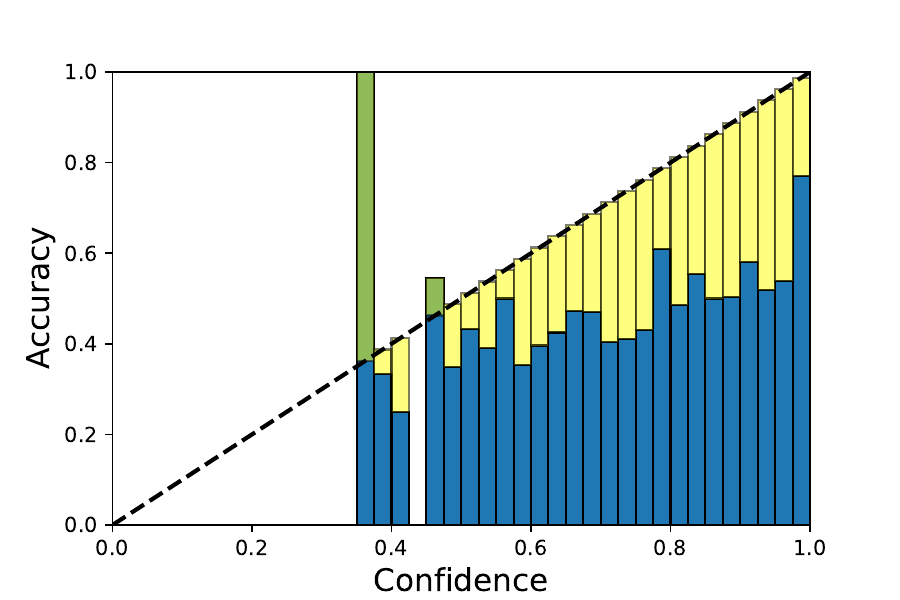}
\hspace{-1cm}
\caption{Reliability diagrams (left for non-DP; middle for large $R = 20$; right for small $R = 0.1$) on SNLI with BERT. Note that the large $R$ is only used for the last 2500 out of 54000 iterations.}
\label{fig:BERT_cali}
\end{figure}

Surprisingly, the existing DP optimizer does not minimize the loss at all, yet the accuracy still improves along the training. We again observe that large $R$ training has significantly better convergence than small $R$ (observe that when turned to large $R$ in the last 2500 steps, the test loss or NLL decreases significantly from $1.79$ to $1.08$, and the training loss or NLL decreases from $1.81$ to $1.47$; while keeping a small $R$ does not reduce the losses). The resulting models have similar accuracy: small $R$ has 74.1\% accuracy; mix-up training has 73.1\% accuracy; as baselines, non-DP has 85.4\% accuracy and the entire training with large $R$ has 65.9\% accuracy. All DP models have the same privacy ($\epsilon=1.25,\delta=1/550152$), and large $R$ training has much better calibration in \Cref{table:calibration}. We remark that all hyperparameters are the same as in the \texttt{Opacus} tutorial.

\subsection{Regression Tasks}
On regression tasks, the performance measure and the loss function are unified as MSE. \Cref{fig:regression_loss} shows that DP training with large $R$ is comparable if not better than that with small $R$. We experiment on the California Housing data (20640 samples, 8 features) and Wine Quality (1599 samples, 11 features, run with full-batch DP-GD). Especially, in the left plot of \Cref{fig:regression_loss}, we observe that small $R$ training may incurs non-monotone convergence, as explained by \Cref{thm:local clipping no noise}, which is mitigated by the large $R$ training. Additional experimental details are available in \Cref{sec:app_regression}.

\begin{figure}[!htb]
\vspace{-0.4cm}
\centering
\includegraphics[width=0.4\linewidth]{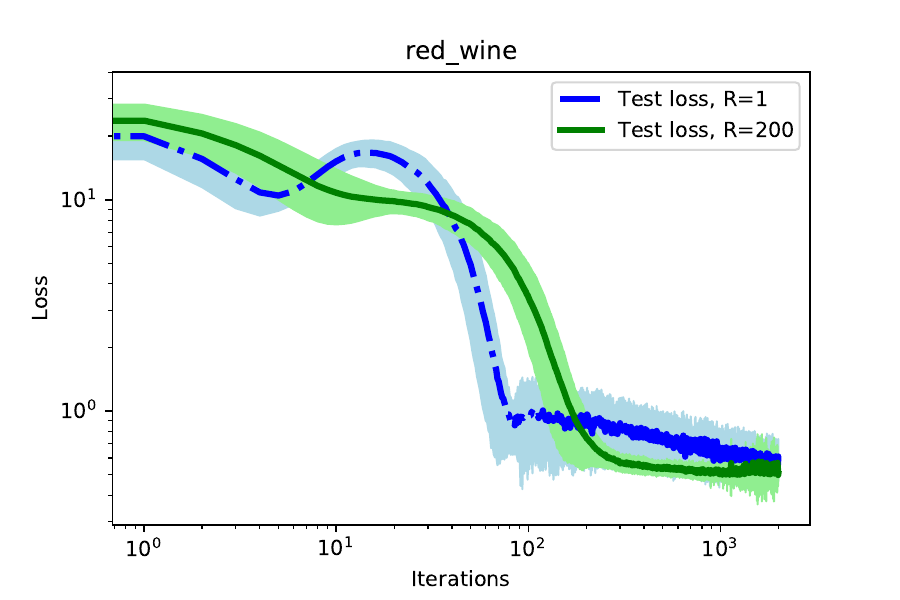}
\includegraphics[width=0.4\linewidth]{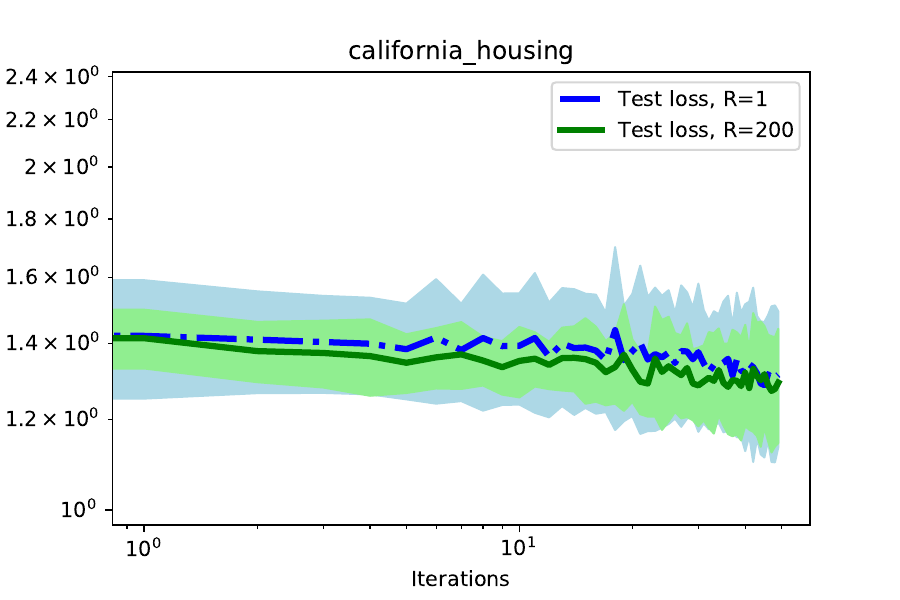}
\caption{Performance of DP optimizers under different clipping norms on the Wine Quality and the California Housing datasets. Experimental details in \Cref{sec:app_regression}.}
\label{fig:regression_loss}
\end{figure}

\section{Discussion}\label{sec:discussion}
In this paper, we provide a continuous-time convergence analysis for DP deep learning, via the NTK matrix, which applies to the general neural network architecture and loss function. We show that in such a regime, the noise addition only affects the privacy risk but not the convergence, whereas the per-sample clipping only affects the convergence and the calibration (especially with different choices of clipping thresholds), but not the privacy risk.

We then study the accuracy-calibration trade-off formed by the DP training with different clipping norms. We show that using a small clipping norm oftentimes trains the more accurate but mis-calibrated models, while a large clipping norm provides a comparably accurate yet much more calibrated model. In fact, several follow-up works have demonstrated that DP training with large $R$ is remarkably accurate and well-calibrated on large transformers with $>10^8$ parameters \citep{zhang2022closer}, and it significantly mitigates the unfairness on various tasks \citep{esipova2022disparate}, while preserving privacy.

A future direction is to study the discrete time convergence when both the learning rate and added noise are not small. One immediate observation is that the noise addition will have an effect on the convergence in this case, which needs further investigation. In addition, the analysis of commonly-used \textit{mini-batch optimizers} is also interesting, since for those optimizers, the training dynamics is no longer deterministic and instead stochastic differential equation will be used for analsis. 
Lastly, the inconsistency between the cross-entropy loss and the prediction accuracy, as well as the connection to the calibration issue are intriguing; their theoretical understanding awaits future research.

\section*{Acknowledgement}
We would like to thank Weijie J. Su, Janardhan Kulkarni, Om Thakkar, and Gautam Kamath for constructive and stimulating discussions around the global clipping function. We also thank the \texttt{Opacus} team for maintaining this amazing library. This work was supported in part by NIH through R01GM124111 and RF1AG063481. 

\bibliography{reference}
\bibliographystyle{tmlr}

\clearpage
\appendix
\section{Linear Algebra Facts}\label{sec:app_linear_algebra}
\begin{fact}\label{fact:product of positive}
The product $A = M_1M_2$, where $M_1$ is a symmetric and positive matrix and $M_2$ a positive diagonal matrix, is positive definite in eigenvalues but is non-symmetric in general (unless the diagonal matrix is constant) and non-positive in quadratic forms.
\end{fact}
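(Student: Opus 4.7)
The plan is to treat the three assertions --- positive eigenvalues, failure of symmetry, and failure of positivity in quadratic form --- separately, since they are logically independent. The first is a structural claim that I would handle by a single similarity transformation; the other two are negative claims that I would establish via explicit small counterexamples.

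For the eigenvalue claim, the key idea is that $M_2$ is positive diagonal and therefore admits an invertible positive diagonal square root $M_2^{1/2}$. A one-line computation then yields the similarity
\begin{equation*}
M_2^{-1/2}(M_1 M_2) M_2^{1/2} = M_2^{1/2} M_1 M_2^{1/2},
\end{equation*}
whose right-hand side is symmetric and positive definite by the standard congruence argument applied to $M_1 \succ 0$ with the invertible matrix $M_2^{1/2}$. Since similar matrices share the same spectrum, the eigenvalues of $A = M_1 M_2$ are exactly those of $M_2^{1/2} M_1 M_2^{1/2}$, hence real and strictly positive.

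For non-symmetry, I would use the transpose identity $(M_1 M_2)^\top = M_2 M_1$, which reduces symmetry of $A$ to commutativity of $M_1$ with $M_2$. Writing $M_2 = \mathrm{diag}(d_1, \dots, d_n)$, the $(i,j)$ entry of the commutator equals $(d_j - d_i)(M_1)_{ij}$, so commutativity forces $(M_1)_{ij} = 0$ whenever $d_i \neq d_j$. As soon as $M_2$ has two distinct diagonal entries and the corresponding off-diagonal entry of $M_1$ is nonzero, $A$ fails to be symmetric; a minimal $2 \times 2$ witness is $M_1 = \left(\begin{smallmatrix} 2 & 1 \\ 1 & 2 \end{smallmatrix}\right)$ and $M_2 = \mathrm{diag}(1, 2)$, giving $A = \left(\begin{smallmatrix} 2 & 2 \\ 1 & 4 \end{smallmatrix}\right)$, and the constraint recovers ``unless the diagonal matrix is constant'' only in the special case that $M_1$ has no structure to exploit.

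For failure of positivity in quadratic form, I would symmetrize via $\x^\top A \x = \x^\top \tfrac{1}{2}(M_1 M_2 + M_2 M_1) \x$, reducing the claim to producing $M_1, M_2$ whose symmetric combination has a negative eigenvalue. The strategy is to keep an off-diagonal entry of $M_1$ nonzero while scaling one diagonal entry of $M_2$ to be large relative to the others; the $2 \times 2$ principal minor of the symmetric combination then acquires a negative determinant, hence a negative eigenvalue, and an eigenvector of that negative eigenvalue provides an $\x$ with $\x^\top A \x < 0$. I do not anticipate any real obstacle: the only step requiring care is verifying that $M_2^{1/2}$ is well-defined and invertible, which is immediate from strict positivity of the diagonal of $M_2$; the rest reduces to arithmetic on $2 \times 2$ matrices.
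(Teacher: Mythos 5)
Your proposal follows essentially the same route as the paper's proof: a similarity transformation onto a symmetric positive definite matrix for the eigenvalue claim, an entrywise commutativity argument for non-symmetry, and a $2\times 2$ counterexample for failure of positivity in quadratic form. The only real divergence is which square root you conjugate by: the paper writes $A \sim M_1^{-1/2} A\, M_1^{1/2} = M_1^{1/2} M_2 M_1^{1/2}$, whereas you use $M_2^{1/2}$, which is marginally more elementary since the square root of a positive diagonal matrix is immediate. However, your similarity identity is written in the wrong direction: $M_2^{-1/2}(M_1 M_2)M_2^{1/2} = M_2^{-1/2} M_1 M_2^{3/2}$, which equals $M_2^{1/2} M_1 M_2^{1/2}$ only when $M_1$ and $M_2$ commute --- precisely the degenerate case. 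The correct conjugation is $M_2^{1/2}(M_1 M_2)M_2^{-1/2} = M_2^{1/2} M_1 M_2^{1/2}$; with that one-line fix the eigenvalue argument is complete. For the quadratic-form claim you stop at a strategy rather than exhibiting numbers; the paper gives the explicit witness $M_1 = \left(\begin{smallmatrix}1 & 1\\ 1 & 2\end{smallmatrix}\right)$, $M_2 = \mathrm{diag}(1, 0.1)$, with $(1,-2)\,A\,(1,-2)^\top = -0.4$. Your plan --- make the ratio of the diagonal entries of $M_2$ extreme so that the determinant of $\tfrac{1}{2}(M_1M_2 + M_2M_1)$ turns negative --- is sound and closes in one line of arithmetic, but a finished proof should carry it out, since the assertion being proved is an existence claim.
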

\begin{proof}[Proof of \Cref{fact:product of positive}]
To see the non-symmetry of $A$, suppose there exists $i,j$ such that $(M_2)_{jj}\neq (M_2)_{ii}$, then
\begin{align*}
(M_1M_2)_{ij}&=\sum_k (M_1)_{ik}(M_2)_{kj}=(M_1)_{ij}(M_2)_{jj}=(M_1)_{ji}(M_2)_{jj},
\\
(M_1M_2)_{ji}&=(M_1)_{ji}(M_2)_{ii}\neq (M_1)_{ji}(M_2)_{jj}.
\end{align*}
Hence $A$ is not symmetric and positive definite. To see that $A$ may be non-positive in the quadratic form, we give a counter-example.
\begin{align*}
M_1=\left(\begin{array}{ll}1 & 1 \\ 1 & 2\end{array}\right), M_2=\left(\begin{array}{cc}1 & 0 \\ 0 & 0.1\end{array}\right), A=M_1M_2 =\left(\begin{array}{cc}1 & 0.1 \\ 1 & 0.2\end{array}\right),(1,-2) A\left(\begin{array}{c}1\\ -2\end{array}\right)=-0.4.
\end{align*}

To see that $A$ is positive in eigenvalues, we claim that an invertible square root $M_1^{1/2}$ exists as $M_1$ is symmetric and positive definite. Now $A$ is similar to $(M_1^{1/2})^{-1}A M_1^{1/2}=M_1^{1/2}M_2 M_1^{1/2}$, hence the non-symmetric $A$ has the same eigenvalues as the symmetric and positive definite $M_1^{1/2}M_2 M_1^{1/2}$.
\end{proof}

\begin{fact}\label{fact:positive eigen not quadratic}
Matrix with all eigenvalues positive may be non-positive in quadratic form.
\end{fact}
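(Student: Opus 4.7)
The plan is to establish \Cref{fact:positive eigen not quadratic} by simply exhibiting one explicit counterexample: a $2\times 2$ matrix $A$ whose two eigenvalues are both strictly positive, yet whose quadratic form attains a negative value on some vector $\x$. Because the statement is existential, a single witness is logically sufficient, and in two dimensions every verification (trace, determinant, one matrix-vector product) is by-hand.

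The guiding principle I would use to locate the counterexample is that for any real matrix $A$, one has $\x^\top A\x = \x^\top \bigl(\tfrac{A+A^\top}{2}\bigr)\x$, so positivity in quadratic form is controlled by the symmetric part $\tfrac{A+A^\top}{2}$, whereas the eigenvalues of $A$ depend on all entries of $A$. For non-symmetric $A$ these two spectra are essentially uncoupled, leaving room to push them in opposite directions. Concretely, I would take the shear-type matrix $A = \bigl(\begin{smallmatrix} 1 & a \\ 0 & 1 \end{smallmatrix}\bigr)$, whose eigenvalues are $1,1$ independently of $a$, while its symmetric part $\bigl(\begin{smallmatrix} 1 & a/2 \\ a/2 & 1 \end{smallmatrix}\bigr)$ has determinant $1 - a^2/4$ and becomes indefinite once $|a|>2$. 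Fixing $a=3$ and testing on $\x=(1,-1)^\top$ gives $\x^\top A\x = -1<0$, completing the argument.

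As an even slicker alternative, I would note that the non-symmetric matrix already produced inside the proof of \Cref{fact:product of positive}, namely $A = \bigl(\begin{smallmatrix} 1 & 0.1 \\ 1 & 0.2 \end{smallmatrix}\bigr)$, has trace $1.2>0$ and determinant $0.1>0$, hence two positive eigenvalues, while the computation $(1,-2)\,A\,(1,-2)^\top = -0.4$ is already recorded in the excerpt. Recycling this witness lets both facts share a single example and shortens the exposition.

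The main (and essentially only) obstacle is conceptual rather than technical: one must resist the reflex of identifying ``all eigenvalues positive'' with ``positive definite'', an identification that is valid for symmetric matrices but breaks for general matrices because the two are conjugate only up to orthogonal similarity in the symmetric case. Once this intuition is set aside, choosing a suitable non-symmetric $A$ and checking the inequality are both routine.
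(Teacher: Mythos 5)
Your proposal is correct and takes essentially the same approach as the paper: exhibiting an explicit $2\times 2$ non-symmetric counterexample with positive eigenvalues but an indefinite quadratic form (the paper uses $\bigl(\begin{smallmatrix}-1 & 3\\ -3 & 8\end{smallmatrix}\bigr)$, whose eigenvalues $\tfrac{1}{2}(7\pm 3\sqrt{5})$ are positive while the $(1,1)$ entry is $-1$). Both of your witnesses check out, and your observation that the matrix from \Cref{fact:product of positive} could be reused is a valid, if minor, economy.
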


\begin{proof}[Proof of \Cref{fact:positive eigen not quadratic}]
\begin{align*}
A=\left(\begin{array}{cc}-1 & 3 \\ -3 & 8\end{array}\right),(1,0) A\left(\begin{array}{c}1 \\ 0\end{array}\right)=-1,
\end{align*}
though eigenvalues of $A$ are $\frac{1}{2}(7\pm3\sqrt{5})>0$.
\end{proof}

\begin{fact}\label{fact:positive quadratic not eigen}
Matrix with positive quadratic forms may have non-positive eigenvalues.
\end{fact}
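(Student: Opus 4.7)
The plan is to produce an explicit small counter-example, since the claim only asserts existence. The key structural observation guiding the construction is that the quadratic form $\x^\top A \x$ of a real matrix depends only on the symmetric part $\frac{1}{2}(A+A^\top)$, so one can start from any symmetric positive definite matrix and freely add a nontrivial antisymmetric perturbation without disturbing positivity in quadratic form. The spectrum, however, is not preserved under this operation and can easily move off the real axis into the complex plane, which is precisely the failure mode Fact A.3 needs.

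First I would try the cleanest $2\times 2$ candidate, namely $A = \left(\begin{array}{cc} 1 & -1 \\ 1 & 1 \end{array}\right)$, whose symmetric part is $\I$ and whose antisymmetric part $\left(\begin{array}{cc} 0 & -1 \\ 1 & 0 \end{array}\right)$ is a rotation generator. Checking positivity in quadratic form is immediate: $\x^\top A \x = x_1^2 + x_2^2 > 0$ for every nonzero $\x \in \R^2$. The characteristic polynomial is $(1-\lambda)^2 + 1$, whose roots are $\lambda = 1 \pm i$; these are complex, hence not positive real numbers, so $A$ fails to be positive in eigenvalues.

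The main consideration is interpretive rather than technical: I would make explicit that "non-positive eigenvalues" here covers complex eigenvalues as well, because for a real matrix with positive definite symmetric part, any real eigenvalue is automatically positive --- plugging a real eigenvector $\x$ with $A\x=\lambda\x$ into the quadratic form yields $\lambda\|\x\|^2 = \x^\top A\x > 0$. With that convention in place, the $2\times 2$ example above completes the proof, and together with Fact A.2 it establishes the full decoupling between positivity in quadratic form and positivity in eigenvalues for non-symmetric matrices, thereby justifying why the two positivity notions in \Cref{sec:clipping_convergence} need to be tracked separately in the NTK analysis.
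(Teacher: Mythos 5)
Your proposal is correct and takes essentially the same route as the paper: the paper's counter-example is the transpose of yours, $A=\left(\begin{smallmatrix}1 & 1\\ -1 & 1\end{smallmatrix}\right)$, with the identical quadratic form $x^2+y^2>0$ and the same complex eigenvalues $1\pm i$. Your added observation that any \emph{real} eigenvalue of a matrix with positive quadratic form must be positive is a nice clarification consistent with the paper's remark that all eigenvalues of its $A$ have positive real part.
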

\begin{proof}[Proof of \Cref{fact:positive quadratic not eigen}]
\begin{align*}
A=\left(\begin{array}{cc}1 & 1 \\ -1 & 1\end{array}\right), (x,y) A \left(\begin{array}{c}x \\ y\end{array}\right)=x^2+y^2>0,
\end{align*}
but eigenvalues of $A$ are $1\pm i$, not positive nor real. Actually, all eigenvalues of $A$ always have positive real part.
\end{proof}

\begin{fact}\label{fact:HC sum}
Sum of products of positive definite (symmetric) matrix and positive diagonal matrix may have zero or negative eigenvalues.
\end{fact}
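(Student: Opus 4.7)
The claim is an existence statement, so I would prove it by exhibiting an explicit counter-example in the smallest nontrivial setting: two $2\times 2$ summands, where the non-commutativity between the symmetric and diagonal factors already suffices to destroy spectral positivity of the sum. This parallels the strategy already used for \Cref{fact:positive eigen not quadratic,fact:positive quadratic not eigen}.

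My template would be $\H_r=\bigl(\begin{smallmatrix} a_r & b_r \\ b_r & c_r \end{smallmatrix}\bigr)$ and $\C_r=\mathrm{diag}(p_r,q_r)$ for $r=1,2$, with $a_r,c_r,p_r,q_r>0$ and $a_rc_r>b_r^2$ so each $\H_r$ is positive definite. The sum is
\begin{equation*}
\sum_r \H_r\C_r=\begin{pmatrix} a_1p_1+a_2p_2 & b_1q_1+b_2q_2 \\ b_1p_1+b_2p_2 & c_1q_1+c_2q_2 \end{pmatrix}.
\end{equation*}
The trace is automatically positive, so to realize a nonpositive eigenvalue it suffices to drive the determinant to zero or negative.

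The key device is to let $\C_1$ and $\C_2$ emphasize different coordinates: take $p_1=q_2=1$ and $q_1=p_2=\epsilon$ for small $\epsilon>0$. As $\epsilon\to 0^+$, the determinant of the sum approaches $a_1c_2-b_1b_2$, so I would pick the $\H_r$ so that $a_rc_r>b_r^2$ individually yet $a_1c_2<b_1b_2$. A concrete choice is $b_1=b_2=1$, $a_1=c_2=0.1$, $c_1=a_2=20$, giving $a_rc_r=2>1$ but $a_1c_2=0.01<1$. At $\epsilon=0.01$, a routine computation shows the sum equals the symmetric matrix $\bigl(\begin{smallmatrix} 0.3 & 1.01 \\ 1.01 & 0.3 \end{smallmatrix}\bigr)$, whose eigenvalues $0.3\pm 1.01$ include the negative value $-0.71$. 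At $\epsilon=1$ the sum becomes $\H_1+\H_2\succ 0$ with positive determinant, so by continuity some intermediate $\epsilon^\ast\in(0,1)$ produces a zero eigenvalue, covering both cases in the statement.

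The main obstacle is a design one rather than computational: any construction in which both $\C_r$ emphasize the same coordinate keeps $\sum_r\H_r\C_r$ spectrally close to a positive-weighted sum of positive definite matrices, which remains positive. Breaking positivity requires staggering which coordinate each $\C_r$ emphasizes while arranging the $\H_r$ so the off-diagonal cross term $b_1b_2$ dominates the diagonal product $a_1c_2$ in the $\epsilon\to 0$ limit; verifying that each $\H_r\succ 0$ and each $\C_r\succ 0$ throughout the continuity argument for the zero-eigenvalue case is the only bookkeeping step that requires care.
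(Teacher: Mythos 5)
Your proposal is correct and takes essentially the same route as the paper: both exhibit an explicit $2\times 2$ counterexample in which the two diagonal factors emphasize opposite coordinates so that the off-diagonal cross term overwhelms the diagonal one (your $\epsilon=0.01$ instance indeed yields the symmetric matrix with eigenvalues $0.3\pm 1.01$, and your intermediate-value argument cleanly supplies the zero-eigenvalue case, which the paper instead handles with a second hard-coded example). All positivity checks on the $\H_r$ and $\C_r$ go through, so the argument is complete.
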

\begin{proof}[Proof of \Cref{fact:HC sum}]

\begin{align*}
\H_1=\left(\begin{array}{cc}8/9 & 2 \\ 2 & 7\end{array}\right),
\quad
\C_1=\left(\begin{array}{cc}0.9 & 0 \\ 0 & 0.4\end{array}\right),
\quad
\H_2=\left(\begin{array}{cc}3 & 2 \\ 2 & 2\end{array}\right),
\quad
\C_2=\left(\begin{array}{cc}0.1 & 0 \\ 0 & 0.6\end{array}\right).
\end{align*}
Although $\H_j$ are positive definite, $\H_1\C_1+\H_2\C_2$ has a zero eigenvalue. Further, if $\H_1[1,1]=0.7$, $\H_1\C_1+\H_2\C_2$ has a negative eigenvalue.
\end{proof}

\section{Details of Main Results}\label{sec:app_proofs}

\subsection{Proofs of main results}
\begin{proof}[Proof of \Cref{prop:DP-GD_gradient_flow}]
Expanding the discrete dynamic in \eqref{eq:DP_GD_discrete} as
$
\w(k+1)=\w(k)-\frac{\eta}{n} \sum_i\nabla_\w \ell_i C_i-\frac{\eta\sigma R}{n} \mathcal{N}(0,1),
$
and chaining it for $r\ge 1$ times, we obtain
\begin{align*}
\w(k+r)- \w(k) = -\sum_{j = 0}^{r-1}\frac{\eta}{n} \sum_i\nabla_\w \ell_i(\w(k+j)) C_i-\sum_{j = 0}^{r-1}\frac{\eta\sigma R}{n} \mathcal{N}(0,1).
\end{align*}
In the limit of $\eta\to 0 $, we re-index the weights $\w$ by time, with $t = k \eta$ and $s = r\eta$. Then consider the above equation at time $t+s$ and $t$: the left hand side becomes $\w(t+s) - \w(t)$; the first summation on the right hand side converges to 
$ -\frac{1}{n}\int_t^{t+s} {\sum}_{i}\nabla_{\w} \ell_i(\tau) C_i(\tau) d\tau $, as long as the integral exists. This can be seen as a numerical integration with the rectangle rule, using $\eta$ as the width, $j$ as the index, and $\frac{1}{n}\int_t^{t+s} {\sum}_{i}\nabla_{\w} \ell_i(\tau) C_i(\tau)$; similarly, the second summation $J(\eta) = \sum_{j = 0}^{r-1}\frac{\eta\sigma R}{n} \mathcal{N}(0,1)$ has
\begin{align*}
    \E[J(\eta)] = 0 \quad \text{ and } \quad \text{Var}(J(\eta)) = \frac{\sigma^2R^2\eta^2}{n^2}r = \eta s \frac{\sigma^2R^2}{n^2}\to 0, \text{ as }\eta\to0.
\end{align*}
Therefore, as $\eta\to 0$, the discrete stochastic dynamic \eqref{eq:DP_GD_discrete} becomes the integral 
\begin{align*}
    \w(t+s) - \w(t) = -\frac{1}{n}\int_t^{t+s} {\sum}_{i}\nabla_{\w} \ell_i(\tau) C_i(\tau) d\tau.
\end{align*}
This integral converges to a deterministic gradient flow, as $s\to 0$, given by
\begin{align*}
    \dot\w(t)\equiv\lim_{s\to 0}\frac{\w(t+s) - \w(t)}{s} = -\frac{1}{n}\lim_{s\to 0}\frac{\int_t^{t+s} {\sum}_{i}\nabla_{\w} \ell_i(\tau) C_i(\tau) d\tau}{s}.
\end{align*}
which corresponds to the ordinary differential equations \eqref{eq:DP-GD_gradient_flow}.
\end{proof}

\begin{proof}[Proof of \Cref{thm:local clipping no noise}]
We prove the statements using the derived gradient flow dynamics \eqref{eq:DP-GD_gradient_flow}.

For Statement 1, from our narrative in \Cref{sec:clipping_convergence}, we know that the flat clipping algorithm has $\H(t)\C(t)$ as its NTK. Since $\H(t)$ is positive definite and $\C(t)$ is a positive diagonal matrix, by \Cref{fact:product of positive}, the product $\H(t)\C(t)$ is positive in eigenvalues, yet may be asymmetric and not positive in quadratic form in general.

Similarly, for Statement 2, we know the NTK of layerwise clipping has the form $\sum_r \H_r(t)\C_r(t)$, which by \Cref{fact:HC sum} is asymmetric in general, and may be not positive in quadratic form nor positive in eigenvalues.

For Statement 3, by the training dynamics \eqref{eq:local_L_dynamic} for the flat clipping algorithm and \eqref{eq:local_layerwise_L_dynamic} for the layerwise clipping, we see that $\dot L$ equal the negation of a quadratic form of the corresponding NTK. By statement 1 \& 2 of this theorem, such quadratic form may not be positive at all $t$, and hence the loss $L(t)$ is not guaranteed to decrease monotonically.

Lastly, for Statement 4, suppose $L(t)$ converges in the sense that $\dot L=0=\frac{\partial L}{\partial \f}\dot \f$. Suppose we have $L>0$, then $\frac{\partial L}{\partial \f}\neq 0$ since $L$ is convex in the prediction $\f$. In this case, we know $\dot \f = 0$. Observe that 
\begin{align*}
    0 = \dot \f = \frac{\partial \f}{\partial \w} \frac{\partial \w}{\partial t} = - \frac{\partial \f}{\partial \w} \frac{\partial \f}{\partial \w}^\top \frac{\partial L}{\partial \f}^\top.
\end{align*}
For the flat clipping, the NTK matrix, $\frac{\partial \f}{\partial \w} \frac{\partial \f}{\partial \w}^\top=\H\C$ is positive in eigenvalues (by Statement 1), so it could only be the case that $\frac{\partial L}{\partial \f} = \bm 0$, contradicting to our premise that $L>0$. Therefore we know  $L=0$ as long as it converges for the flat clipping. On the other hand, for the layerwise clipping, the NTK may be not positive in eigenvalues. Hence it is possible that $L\neq 0$ when $\dot L=0$.


\end{proof}

\begin{proof}[Proof of \Cref{thm:global clipping no noise}]
The proof is similar to the previous proof and thus omitted.


\end{proof}

\section{Experimental Details}\label{sec:app_experments_details}

\subsection{MNIST}\label{sec:app_MNIST}
For MNIST, we use the standard CNN in \href{https://github.com/tensorflow/privacy/blob/master/tutorials/walkthrough/mnist_scratch.py}{\texttt{Tensorflow Privacy}} and \href{https://github.com/pytorch/opacus/blob/master/examples/mnist.py}{\texttt{Opacus}}, as listed below. The training hyperparameters (e.g. batch size) in \Cref{sec:mnist} are exactly the same as reported in \url{https://github.com/tensorflow/privacy/tree/master/tutorials}, which gives 96.6\% accuracy for the small $R$ clipping in Tensorflow and similar accuracy in Pytorch, where our experiments are conducted. The non-DP network is about 99\% accurate. Notice the tutorial uses a different privacy accountant than the GDP that we used.

\begin{verbatim}
class SampleConvNet(nn.Module):
    def __init__(self):
        super().__init__()
        self.conv1 = nn.Conv2d(1, 16, 8, 2, padding=3)
        self.conv2 = nn.Conv2d(16, 32, 4, 2)
        self.fc1 = nn.Linear(32 * 4 * 4, 32)
        self.fc2 = nn.Linear(32, 10)

    def forward(self, x):
        # x of shape [B, 1, 28, 28]
        x = F.relu(self.conv1(x))  # -> [B, 16, 14, 14]
        x = F.max_pool2d(x, 2, 1)  # -> [B, 16, 13, 13]
        x = F.relu(self.conv2(x))  # -> [B, 32, 5, 5]
        x = F.max_pool2d(x, 2, 1)  # -> [B, 32, 4, 4]
        x = x.view(-1, 32 * 4 * 4)  # -> [B, 512]
        x = F.relu(self.fc1(x))  # -> [B, 32]
        x = self.fc2(x)  # -> [B, 10]
        return x
\end{verbatim}

\subsection{CIFAR10 with Vision Transformer}
\label{app:cifar10 2layer}
In \Cref{sec:cifar10}, we adopt the model from TIMM library. In addition to \Cref{fig:calibration_confidences} and \Cref{fig:CIFAR10_confidence}, we plot in \Cref{fig:cifar true prob} the distribution of prediction probability on the true class, say $[\bm\pi_i]_{y_i}$ for the $i$-th sample (notice that \Cref{fig:calibration_confidences} plots $\max_k[\bm\pi_i]_k$). Clearly the small $R$ clipping gives overly confident prediction: almost half of the time the true class is assigned close to zero prediction probability. The large $R$ clipping has a more balanced prediction probability that is less concentrated to 1.

\begin{figure}[!htb]
\centering
\includegraphics[width=0.4\linewidth]{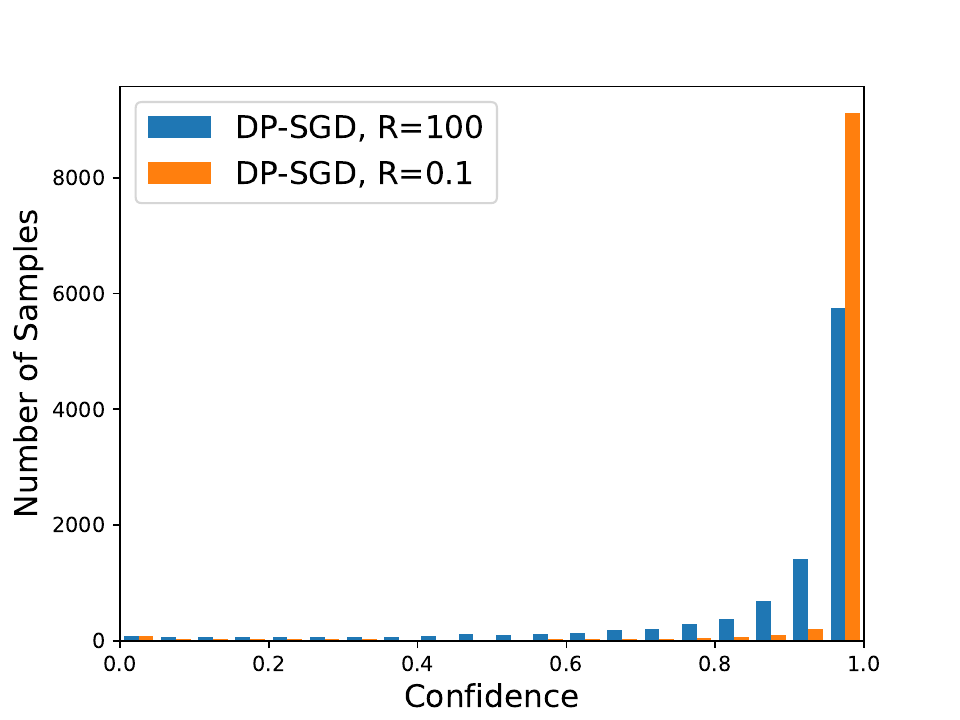}
\caption{Prediction probability on the true class on CIFAR10 with Vision Transformer.}
\label{fig:cifar true prob}
\end{figure}

\subsection{SNLI with BERT model}
\label{app:bert}
In \Cref{sec:bert experiment}, we use the model from \texttt{Opacus} tutorial in \url{https://github.com/pytorch/opacus/blob/master/tutorials/building_text_classifier.ipynb}. The BERT architecture can be found in \url{https://github.com/pytorch/opacus/blob/master/tutorials/img/BERT.png}.

To train the BERT model, we do the standard pre-processing on the corpus (tokenize the input, cut or pad each sequence to MAX\_LENGTH = 128, and convert tokens into unique IDs). We train the BERT model for 3 epochs. Similar to \Cref{app:cifar10 2layer}, in addition to \Cref{fig:BERT_loss} and \Cref{fig:BERT_cali}, we plot the distribution of prediction probability on the true class in \Cref{fig:BERT_cali_hist}. Again, the small $R$ clipping is overly confident, with probability masses concentrating on the two extremes, yet the large $R$ clipping is more balanced in assigning the prediction probability.
\begin{figure}[!htb]
    \centering
\includegraphics[width=0.4\linewidth]{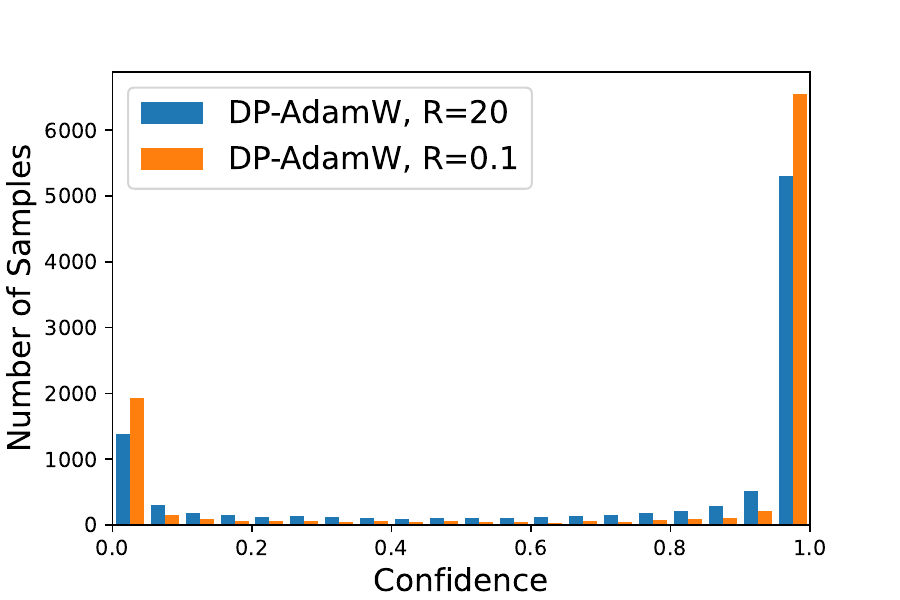}
\caption{Histogram of predicted confidence on the true class on SNLI with BERT using large and small clipping norms.}
\label{fig:BERT_cali_hist}
\end{figure}

\subsection{Regression  Experiments}\label{sec:app_regression}
We experiment on the Wine Quality\footnote{\url{http://archive.672ics.uci.edu/ml/datasets/Wine+Quality}} (1279 training samples, 320 test samples, 11 features) and California Housing\footnote{\url{http://lib.stat.cmu.edu/datasets/houses.zip}} (18576 training samples, 2064 test samples, 8 features) datasets in \Cref{sec:experiments}.
For the California Housing, we use DP-Adam with batch size 256. Since other datasets are not large, we use the full-batch DP-GD. 

Across all the two experiments, we set $\delta = \frac{1}{1.1\times \text{training sample size}}$ and use the four-layer neural network with the following structure, where \texttt{input\_width} is the input dimension for each dataset:
\begin{verbatim}
class Net(nn.Module):
    def __init__(self, input_width):
        super(StandardNet, self).__init__()
        self.fc1 = nn.Linear(input_width, 64, bias = True)
        self.fc2 = nn.Linear(64, 64, bias = True)
        self.fc3 = nn.Linear(64, 32, bias = True)
        self.fc4 = nn.Linear(32, 1, bias = True)
        
    def forward(self, x):
        x = F.relu(self.fc1(x))
        x = F.relu(self.fc2(x))
        x = F.relu(self.fc3(x))
        return self.fc4(x)
\end{verbatim}

The California Housing dataset is used to predict the mean price value of owner-occupied home in California. We train with DP-Adam, noise $\sigma = 1$, clipping norm $1$, and learning rate $0.0002$. We also trained a non-DP GD with the same learning rate. The GDP accountant gives $\epsilon=4.41$ after 50 epochs / 3650 iterations.

The UCI Wine Quality (red wine) dataset is used to predict the wine quality (an integer score between 0 and 10). We train with DP-GD, noise $\sigma = 35$, clipping norm $2$, and learning rate $0.03$. We also trained a non-DP GD with learning rate $0.001$. The GDP accountant gives $\epsilon=4.40$ after 2000 iterations.

The California Housing and Wine Quality experiments are conducted in 30 independent runs. In \Cref{fig:regression_loss}, the lines are the average losses and the shaded regions are the standard deviations.

\if
\section{Optimizers with Clipping beyond Gradient Descent}\label{sec:app_optimizers}
We can extend \Cref{thm:local clipping no noise} and \Cref{thm:global clipping no noise} to a wide class of full-batch optimizers besides DP-GD (with $\sigma=0$ and $\sigma\neq 0$). We show that the NTK matrices in these optimizers determine whether the loss is zero if the model converges. 

\begin{theorem}\label{thm:all optimizers}
For an arbitrary neural network and a loss convex in $f$, suppose we clip the per-sample gradients in the gradient flow of Heavy Ball (HB), Nesterov Accelerated Gradient (NAG), Adam, AdaGrad, RMSprop or their DP variants  and that $\|v_t^{(i)}\|_2\leq Z$, assuming $\H(t)\succ 0$, then
\begin{enumerate}
\item if the loss $L(t)$ converges, it must converge to 0 for local flat, global flat and global layerwise clipping;
\item even if the loss $L(t)$ converges, it may converge to non-zero for local layerwise clipping.
\end{enumerate}
\end{theorem}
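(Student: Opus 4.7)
The plan is to reduce each optimizer to the same kernel-theoretic fixed-point equation already established for DP-GD in \Cref{thm:local clipping no noise} and \Cref{thm:global clipping no noise}, so that the clipping-dependent positivity structure of the NTK matrix dictates whether $L(t)\to 0$.

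First, I would invoke \Cref{prop:DP-GD_gradient_flow} (certainty equivalence) and its remark to discard the noise term, so it suffices to analyze the deterministic gradient flow of each optimizer. Next, I would write down the continuous-time ODE for each listed optimizer with per-sample clipping. For HB and NAG, the flow is a second-order system $\ddot{\w}+\mu\dot{\w}=-\frac{1}{n}\sum_i\nabla_\w\ell_i\,C_i$ (layerwise versions analogous). For Adam/RMSprop/ADAGRAD, the flow takes the form $\dot{\w}=-\P(t)^{-1}\cdot \frac{1}{n}\sum_i\nabla_\w\ell_i\,C_i$, where $\P(t)$ is a positive diagonal preconditioner built from second-moment running averages (for Adam this requires also using the auxiliary ODE for $\m$ and noting that at steady state $\m$ aligns with the current stochastic gradient). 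The crucial observation in each case is that, at any true equilibrium of $(\w,\text{momentum},\text{preconditioner})$, the vanishing of $\dot{\w}$ together with nonsingularity of the preconditioner forces the \emph{same} fixed-point condition as in DP-GD:
\begin{align}\label{eq:fixedpt}
\sum\nolimits_i \nabla_\w \ell_i\,C_i = 0 \quad\text{(flat)},\qquad \sum\nolimits_i \nabla_{\w_r}\ell_i\,C_{i,r}=0\;\forall r\quad\text{(layerwise)}.
\end{align}

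With \eqref{eq:fixedpt} in hand, I would mirror the argument in the proofs of \Cref{thm:local clipping no noise}(4) and \Cref{thm:global clipping no noise}(2). Left-multiplying \eqref{eq:fixedpt} by $\frac{\partial \f}{\partial \w}$ (respectively $\frac{\partial \f}{\partial \w_r}$ and summing) yields
\begin{align*}
\H(t)\C(t)\,\tfrac{\partial L}{\partial\f}^\top = 0 \quad\text{or}\quad \textstyle\sum_r \H_r(t)\C_r(t)\,\tfrac{\partial L}{\partial\f}^\top = 0.
\end{align*}
For local flat, global flat, and global layerwise clipping, \Cref{thm:local clipping no noise}(1) and \Cref{thm:global clipping no noise}(1) guarantee that the relevant NTK matrix is positive in eigenvalues, hence invertible, so $\frac{\partial L}{\partial \f}=\mathbf{0}$. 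Convexity of each $\ell_i$ in $f_i$ together with the assumption that the loss attains its minimum $0$ at $f_i=y_i$ then forces $\ell_i=0$ for every $i$ and thus $L=0$, giving statement~1.

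For statement~2, I would exhibit (or invoke) a counterexample using the local layerwise clipping. By \Cref{fact:HC sum}, $\sum_r \H_r\C_r$ can be singular or even indefinite while each $\H_r\succ 0$ and each $\C_r$ is positive diagonal. Pick parameters so that this kernel is nontrivial and then construct a loss/prediction configuration for which $\frac{\partial L}{\partial \f}^\top$ lies in the kernel while $L>0$; this yields an equilibrium of the momentum/preconditioned flow at which $L$ is stuck at a positive value, and the argument transfers to all listed optimizers since \eqref{eq:fixedpt} is the shared equilibrium condition.

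The main obstacle I anticipate is the reduction to the clean fixed-point condition \eqref{eq:fixedpt} for the adaptive methods, since Adam and RMSprop couple $\w$ to auxiliary variables $(\m,\v_{\text{var}})$ and the preconditioner $\P(t)$ is only guaranteed positive definite when the $\epsilon$-regularizer or the assumption $\H\succ 0$ keeps it bounded away from singularity; care is needed to rule out pathological equilibria where $\P^{-1}$ degenerates. Once that technicality is handled uniformly (e.g.\ by working with the $\epsilon$-stabilized version that is standard in practice), the NTK positivity arguments from \Cref{sec:clipping_convergence} carry over verbatim.
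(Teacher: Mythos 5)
Your proposal is correct and follows essentially the same route as the paper: reduce each optimizer's continuous-time dynamical system to the shared equilibrium condition $\sum_i\nabla_\w\ell_i C_i=0$, left-multiply by $\partial\f/\partial\w$ to obtain $\H\C\,\partial L/\partial\f^\top=0$, and then let the positivity-in-eigenvalues (or lack thereof) of the clipped NTK decide whether $\partial L/\partial\f$ must vanish. The only cosmetic differences are that the paper writes HB/NAG and the adaptive methods as first-order systems in auxiliary variables $(\m,\v)$ following the cited dynamical-systems formulation (with the $\xi$-regularizer handling the preconditioner degeneracy you flag), rather than your second-order/preconditioned forms.
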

The proof can be easily extracted from that of \Cref{thm:local clipping no noise} and \Cref{thm:global clipping no noise} and hence is omitted. We highlight that DP optimizers in general correspond to deterministic gradient flow (for DP-GD, see \Cref{prop:DP-GD_gradient_flow}) -- as long as the noise injected in each step is linear in step size. Therefore, the gradient flow is the same whether $\sigma>0$ (the noisy case) or $\sigma=0$ (the noiseless case).

We also note that the only difference between layerwise clipping and  flat clipping is the form of NTK kernel, as we showed in \Cref{thm:local clipping no noise} and \Cref{thm:global clipping no noise}. In this section, we will only present the result for flat clipping since its generalization to layerwise clipping is straightforward. 
In fact, part of the results for the global clipping has been implied by \citep{bu2021dynamical}, which establishes the error dynamics for HB and NAG, but only on MSE loss and on specific network architecture. To analyze a broader class of optimizers and on the general loss and architecture, we turn to \citep{da2020general} which gives the dynamical systems of all optimizers aforementioned.

\subsection{Gradient Methods with Momentum}
We study two commonly used momentums, the Heavy Ball \citep{polyak1964some} and the Nesterov's one \citep{nesterov1983method}. These gradient methods correspond to the gradient flow system \citep[Equation (2.1)]{da2020general}
\begin{align}
  \dot{\w}(t) &=-\m(t) ,
  \label{eq:F1}
  \\ \dot{\m}(t) &=\sum_i\nabla_\w \ell_i C_i-r(t) \m(t).
  \label{eq:F2}
\end{align}

We note that HB corresponds to time-independent $r(t)=r$ for some $r$ and NAG corresponds to $r(t)=3/t$. At the stationary point, we have $\dot L=\dot\w=\dot \m=0$. Consequently \eqref{eq:F1} gives $\m=\bm 0$ and \eqref{eq:F2} gives
\begin{align}
    \sum_i\nabla_\w\ell_i C_i=r \m=\bm 0.
\end{align}
Multiplying both sides with $\frac{\partial \f}{\partial \w}$, we get
\begin{align*}
    \H\C\frac{\partial L}{\partial \f}=\bm 0,
\end{align*}
where $\frac{\partial L}{\partial \f}$ is defined in \eqref{eq:local_L_dynamic}. If the NTK is positive in eigenvalues, as is the case for local flat and global clipping, we get $\frac{\partial L}{\partial \f}=\bm 0$ and $\ell_i=0$ for all $i$ since the loss is convex (thus the only stationary point is the global minimum 0). Hence $L=0$. Otherwise, e.g. for local layerwise clipping, it is possible that $\frac{\partial L}{\partial \f}^\top\neq\bm 0$ and $L\neq 0$.

\subsection{Adaptive Gradient Methods with Momentum}
We consider Adam which corresponds to the dynamical system in \citep[Equation (2.1)]{da2020general}
\begin{align}
  \dot{\w}(t) &=-\m(t)/\sqrt{\v(t)+\xi}, 
      \label{eq:F3}
    \\ 
    \dot{\m}(t) &=\sum_i\nabla_\w \ell_i C_i-\frac{1}{\alpha_1} \m(t),
  \label{eq:F4}
  \\
  \dot{\v}(t)&=\frac{1}{\alpha_2}\left[\sum_i\nabla_\w \ell_i C_i\right]^{2}-\frac{1}{\alpha_2} \v(t).
    \label{eq:F5}
\end{align}
Here $\xi\geq 0$ and the square is taken elementwise. At the stationary point, we have $\dot L=\dot\w=\dot \m=\dot\v=0$.  Consequently \eqref{eq:F3} gives $\m=\bm 0$ and \eqref{eq:F4} gives $\sum_i\nabla_\w\ell_i C_i=\m/\alpha_1=\bm 0$. Multiplying both sides with $\frac{\partial \f}{\partial \w}$, we get again $\H\C\frac{\partial L}{\partial \f}=\bm 0$,
and hence the results follow.

\subsection{Adaptive Gradient Methods without Momentum}
We consider ADAGRAD and RMSprop which correspond to the dynamical system in  \citep[Remark 1]{da2020general}
\begin{align}
  \dot{\w}(t) &=-\sum_i\nabla_\w \ell_i C_i/\sqrt{\v(t)+\xi}, 
      \label{eq:F6}
    \\ 
  \dot{\v}(t)&=p(t)\left[\sum_i\nabla_\w \ell_i C_i\right]^{2}-q(t) \v(t),
  \label{eq:F7}
\end{align}
for some $p(t), q(t)$. At the stationary point, we have $\dot L=\dot\w=\dot \m=\dot\v=0$.  Consequently \eqref{eq:F6} gives $\sum_i\nabla_\w \ell_i C_i=\bm 0$. Multiplying both sides with $\frac{\partial \f}{\partial \w}$, we get again $\H\C\frac{\partial L}{\partial \f}^\top=0$,
and hence the results follow.
\subsection{Applying Global Clipping to DP Optimization Algorithms}
Here we give some concrete algorithms where we can apply the global clipping method.

Many DP optimizers, non-adaptive (like HeavyBall and Nesterov Accelerated Gradient) and adaptive (like Adam, ADAGRAD), can use the global clipping easily. These optimizers are supported in \texttt{Opacus} and \texttt{Tensorflow Privacy} libraries. The original form of DP-Adam can be found in \citep{bu2019deep}.
\begin{algorithm}[H]
	\caption{$\texttt{DP-Adam}$ (with local or global per-sample clipping)}\label{alg:dpadam}
	\textbf{Input:} Dataset $S = \{(\x_1,y_1),\ldots,(\x_n,y_n)\}$, loss function $\ell(f(\x_i, \w_t), y_i)$.

    \textbf{Parameters:} initial weights $\w_0$, learning rate $\eta_t$, subsampling probability $p$, number of iterations $T$, noise scale $\sigma$, gradient norm bound $R$, maximum norm bound $Z$, momentum parameters $(\beta_1, \beta_2)$, initial momentum $m_0$, initial past squared gradient $u_0$, and a small constant $\xi>0$.
\begin{algorithmic}[0]
		\For{$t = 0, \ldots, T-1$}
		\Statex {\hspace{0.55cm}Take a subsample $B_t\subseteq \{1, \ldots, n\}$ from training set $D$ with subsampling probability $p$}
		\For{$i\in B_t$}
		\Statex \hspace{1.1cm} {$v_t^{(i)} \gets \nabla_{\w} \ell(f(\x_i, \w_t), y_i)$}		
		\Statex \hspace{1.1cm} Option 1: {$C_{local,i} = \min\big\{1, R/\|v_t^{(i)}\|_2\big\}$}
		\Comment{Local clipping factor}
        \Statex \hspace{1.1cm} Option 2: {$C_{global,i} \equiv
		\begin{cases}
		R/Z&\text{ if }\|v_t^{(i)}\|_2\leq Z
		\\
		0&\text{ if }\|v_t^{(i)}\|_2> Z
		\end{cases}
		$}
        \Comment{Global clipping factor}
        \Statex \hspace{1.1cm} {$\bar{v}_t^{(i)} \gets C_i \cdot v_t^{(i)} $}
        \Comment{Clip the gradient}
		\EndFor
		\Statex \hspace{0.55cm} {$\widetilde{V
		}_t\gets \frac{1}{|B_t|}\left(\sum_{i\in B_t}\bar{v}_t^{(i)}+\sigma R \cdot \mathcal{N}(0, I)\right)$}
		\Comment{Apply Gaussian mechanism}
		\Statex \hspace{0.55cm} {$m_t\gets \beta_1 m_{t-1} + (1 - \beta_1)\widetilde{V
		}_t$}
		\Statex \hspace{0.55cm} {$u_t \gets \beta_2u_{t-1} + (1-\beta_2)( \widetilde{V}_t\odot\widetilde{V}_t)$}
		\Comment{$\odot$ is the Hadamard product}
		\Statex \hspace{0.55cm} {$\w_{t+1} \gets \w_{t}-\eta_t m_t/(\sqrt{u_t} + \xi)$}
        \Comment{Descend}
		\EndFor
		\State {\bf Output} $\w_T$
	\end{algorithmic}
\end{algorithm}

Recently, \citep{bu2021fast} proposes to accelerate many DP optimizers with the JL projections in a memory efficient manner. Examples include DP-SGD-JL and DP-Adam-JL. The acceleration is achieved by only approximately instead of exactly computing the per-sample gradient norms. This does not affect the clipping operation afterwards and hence we can replace the local clipping currently used by our global clipping.

\begin{algorithm}[H]
	\caption{$\texttt{DP-SGD-JL}$ (with local or global per-sample clipping)}\label{alg:dpsgdjl}
	\textbf{Input:} Dataset $S = \{(\x_1,y_1),\ldots,(\x_n,y_n)\}$, loss function $\ell(f(\x_i, \w_t), y_i)$.

    \textbf{Parameters:} initial weights $\w_0$, learning rate $\eta_t$, subsampling probability $p$, number of iterations $T$, noise scale $\sigma$, gradient norm bound $R$, maximum norm bound $Z$, number of JL projections $r$.
\begin{algorithmic}[0]
		\For{$t = 0, \ldots, T-1$}
		\Statex {\hspace{0.5cm}Take a subsample $B_t\subseteq \{1, \ldots, n\}$ from training set $D$ with subsampling probability $p$}
		\Statex {\hspace{0.5cm}Sample $u_1, ..., u_r\sim \mathcal{N}(0, I)$}
		\For{$i\in B_t$}
		\Statex \hspace{1cm} {$v_t^{(i)} \gets \nabla_{\w} \ell(f(\x_i, \w_t), y_i)$}		
		\For{$j=1$ to $r$}
		\Statex \hspace{1.5cm} {$P_{ij}\gets v_t^{(i)}\cdot u_j$ (using \texttt{jvp})}		
		\EndFor
		\Statex \hspace{1cm} {$M_i = \sqrt{\frac{1}{r}\sum_{j = 1}^r P_{ij}^2}$}
		\Comment{$M_i$ is an estimate for $\|v_t^{(i)}\|_2$.}
			\Statex \hspace{1cm} Option 1: {$C_{local,i} = \min\big\{1, R/M_i\big\}$}
		\Comment{Local clipping factor}
\Statex \hspace{1cm} Option 2: {$C_{global,i} \equiv
		\begin{cases}
		R/Z&\text{ if }M_i\leq Z
		\\
		0&\text{ if }M_i> Z
		\end{cases}
		$}
        \Statex \hspace{1cm} {$\bar{v}_t^{(i)} \gets C_i \cdot v_t^{(i)} $}
        \Comment{Clip the gradient}
		\EndFor
		\Statex \hspace{0.5cm} {$\bar{V}\gets\sum_{i\in B_t}\bar{v}_t^{(i)}$}
		\Comment{Sum over batch}
		\Statex \hspace{0.5cm} {$\widetilde{V}_t\gets \bar{V}_t+\sigma R \cdot \mathcal{N}(0, I)$}
		\Comment{Apply Gaussian mechanism}
		\Statex \hspace{0.5cm} {$\w_{t+1} \gets \w_{t} - \frac{\eta_t}{|B_t|} \widetilde{V}_t$}
		\Comment{Descend}
		\EndFor
		\State {\bf Output} $\w_T$
	\end{algorithmic}
\end{algorithm}

In another line of research on the Bayesian neural networks, where the reliability of networks are emphasized, stochastic gradient Markov chain Monte Carlo (SG-MCMC) methods are applied to quantify the uncertainty of the weights. When DP is within the scope, one popular method is the DP stochastic gradient Langevin dynamics (DP-SGLD), where we can apply the global clipping.
\begin{algorithm}[H]
	\caption{$\texttt{DP-SGLD}$ (with local or global per-sample clipping)}\label{alg:dpsgld}
	\textbf{Input:} Dataset $S = \{(\x_1,y_1),\ldots,(\x_n,y_n)\}$, loss function $\ell(f(\x_i, \w_t), y_i)$.

    \textbf{Parameters:} initial weights $\w_0$, learning rate $\eta_t$, subsampling probability $p$, number of iterations $T$, gradient norm bound $R$, maximum norm bound $Z$, and a prior $p(\w)$.
\begin{algorithmic}[0]
		\For{$t = 0, \ldots, T-1$}
		\Statex {\hspace{0.55cm}Take a subsample $B_t\subseteq \{1, \ldots, n\}$ from training set $D$ with subsampling probability $p$}
		\For{$i\in B_t$}
		\Statex \hspace{1.1cm} {$v_t^{(i)} \gets \nabla_{\w} \ell(f(\x_i, \w_t), y_i)$}		
		\Statex \hspace{1.1cm} Option 1: {$C_{local,i} = \min\big\{1, R/\|v_t^{(i)}\|_2\big\}$}
		\Comment{Local clipping factor}
\Statex \hspace{1.1cm} Option 2: {$C_{global,i} \equiv
		\begin{cases}
		R/Z&\text{ if }\|v_t^{(i)}\|_2\leq Z
		\\
		0&\text{ if }\|v_t^{(i)}\|_2> Z
		\end{cases}
		$}
\Comment{Global clipping factor}
        \Statex \hspace{1.1cm} {$\bar{v}_t^{(i)} \gets C_i \cdot v_t^{(i)} $}
        \Comment{Clip the gradient}
		\EndFor
		\Statex \hspace{0.55cm} {$\bar{V}\gets\sum_{i\in B_t}\bar{v}_t^{(i)}$}
		\Comment{Sum over batch}
		\Statex \hspace{0.55cm} {$\w_{t+1} \gets \w_{t} - \eta_t\left(\frac{\bar{V}_t}{|B_t|} - \frac{\nabla_{\w} \log p(\w)}{n}\right) + \mathcal{N}(0, \eta_t I)$}
		\Comment{Descend with Gaussian noise}
		\EndFor
		\State {\bf Output} $\w_T$
	\end{algorithmic}
\end{algorithm}
Here we treat $\w_{t+1}$ as a posterior sample, instead of as a point estimate. Notice that other SG-MCMC methods such as SGNHT \citep{ding2014bayesian} can also be DP with the global per-sample clipping.

We emphasize that our global clipping applies whenever an optimization algorithm uses  per-sample clipping. Therefore this appendix only gives a few example of the full capacity of global clipping.

\subsection{Applying Global Clipping to DP Federated Learning}
Here we present two federated learning methods in \citep{mcmahan2017learning}: \texttt{DP-FedSGD} and \texttt{DP-FedAvg} with the global or local clipping. Notice that we only demonstrate the flat clippings and SGD. Layerwise clippings can be easily implemented by changing the ClipFn, and the optimizer can be replaced by other ones.

\newcommand{\spc}{$\quad$}
\newcommand{\comment}[1]{$\ $ // \emph{#1}}
\newcommand{\SUB}[1]{\hspace{-0.15in} \textbf{#1}}

\begin{figure}[!htb]
\begin{small}
\centering
\rule{\textwidth}{0.4pt}
\begin{minipage}[t]{0.49\textwidth} 
\begin{center}
\begin{algorithmic}
\SUB{Main training loop:}
   \State \emph{parameters}
   \State \spc user selection probability $q \in (0, 1]$
   \State \spc number of examples per-user $w_k$
   \State \spc gradient norm bound $R$
   \State \spc maximum norm bound $Z$
   \State \spc noise scale $\sigma$
   \State \spc UserUpdate (for FedAvg or FedSGD)
   \State \spc ClipFn (LocalClip or GlobalClip)
   \State
   \State Initialize model $\theta^0$, $W = \sum_{k} w_k$ 
   \For{each round $t = 0, 1, 2, \dots$}
     \State $\mathcal{C}^t \leftarrow$ (sample users with probability $q$)
     \For{each user $k \in \mathcal{C}^t$ \textbf{in parallel}}
       \State $\Delta^{t+1}_k \leftarrow \text{UserUpdate}(k, \theta^t, \textup{ClipFn})$ 
     \EndFor
     \State $\Delta^{t+1} = \frac{\sum_{k\in \mathcal{C}^t} w_k \Delta_k}{qW} $
     \State $\theta^{t+1} \leftarrow \theta^t + \Delta^{t+1} + \frac{\sigma}{qW} R \mathcal{N}(0, I)$
   \EndFor
\end{algorithmic}
\end{center}
\vfill
\end{minipage}
\hfill
\begin{minipage}[t]{0.49\textwidth}
\begin{center}
\begin{algorithmic}
 \SUB{FlatClip$(\Delta)$:}
   \State return $\Delta\cdot\min\{1,R/\|\Delta\|_2\}$ 
 \State
 \SUB{GlobalClip$(\Delta)$}:
   \State if $\Delta>Z$:
\State \spc return 0
\State else:
\State \spc return $\Delta\cdot R/Z$

 \State
 \SUB{UserUpdateFedAvg($k, \theta^0$, {\normalfont ClipFn}):}
  \State \emph{parameters} $B$, $E$, $\eta$
  \State $\theta \leftarrow \theta^0$
  \For{each local epoch $i$ from $1$ to $E$}
    \State $\mathcal{B} \leftarrow$ ($k$'s data split into size $B$ batches)
    \For{batch $b \in \mathcal{B}$}
      \State $\theta \leftarrow \theta - \eta \nabla\ell(\theta; b)$
      \State $\theta \leftarrow \theta^0 + \textup{ClipFn}(\theta - \theta^0)$
    \EndFor
 \EndFor
 \State return update $\Delta_k = \theta - \theta^0$ 
 
 \State
 \SUB{UserUpdateFedSGD($k, \theta^0$, {\normalfont ClipFn}):}
  \State \emph{parameters} $B$, $\eta$
  \State select a batch $b$ of size $B$ from $k$'s examples
  \State return update $\Delta_k = \textup{ClipFn}(-\eta \nabla\ell(\theta; b))$ 
\end{algorithmic}
\end{center}
\vfill
\end{minipage}
\end{small}

\rule{\textwidth}{0.4pt} 
\captionof{algorithm}{\texttt{DP-FedAvg} and \texttt{DP-FedSGD} with global or local clipping.}\label{alg:fedavg}
\end{figure}
\fi

\section{Global clipping and code implementation}\label{sec:app_code}

In an earlier version of this paper, we proposed a new per-sample gradient clipping, termed as the \textit{global clipping}. The global clipping computes $C_{global,i}=\mathbb{I}(\|\g^{(i)}\|\leq R)$, i.e. only assigning 0 or 1 as the clipping factors to each per-sample gradient. 

As demonstrated in \eqref{eq:private grad}, our global clipping works with any DP optimizers (e.g., DP-Adam, DP-RMSprop, DP-FTRL\citep{kairouz2021practical}, DP-SGD-JL\citep{bu2021fast}, etc.), with \textit{identical computational complexity} as the existing per-sample clipping $C_i=\min(R/\|\g^{(i)}\|,1)$. Building on top of the Pytorch \texttt{Opacus}\footnote{see \url{https://github.com/pytorch/opacus} as for 2021/09/09.} library, we only need to add one line of code into
\\\\
\url{https://github.com/pytorch/opacus/blob/master/opacus/per_sample_gradient_clip.py}
\\\\
To understand our implementation, we can equivalently view 
\begin{align*}
C_{global,i}=
\begin{cases}
1&\text{ if }C_{i}= 1\Longleftrightarrow \|\g^{(i)}\|<R\Longleftrightarrow \min(R/\|\g^{(i)}\|,1)=1
\\
0&\text{ if }C_{i}= R/\|\g^{(i)}\|\Longleftrightarrow \|\g^{(i)}\|\geq R\Longleftrightarrow \min(R/\|\g^{(i)}\|,1)=R/\|\g^{(i)}\|
\end{cases}
\end{align*}
In this formulation, we can easily implement our global clipping by leveraging the \texttt{Opacus==0.15} library (which already computes $C_{i}$). This can be realized in multiple ways. For example, we can add the following one line after line 179 (within the for loop),
\begin{verbatim}
clip_factor=(clip_factor>=1).float()
\end{verbatim}

At high level, global clipping does not clip small per-sample gradients (in terms of magnitude) and completely remove large ones. This may be beneficial to the optimization, since large per-sample gradients often correspond to samples that are hard-to-learn, noisy or adversarial. It is important to set a large clipping norm $R$ for the global clipping, so that the information from small per-sample gradients are not wasted. However, using a large clipping norm makes the global clipping similar to the existing clipping, basically not clipping most of the per-sample gradients. We confirm that empirically, with large clipping norm, applying the global clipping and existing clipping have negligible difference on the convergence and calibration.

\end{document}